\title{Fixed Design Analysis of Regularization-Based Continual Learning}
\author{Haoran Li 
\thanks{Equal Contribution
}\\
Department of Computer Science\\
Rice University\\
Houston, TX \\
\texttt{lihr@rice.edu} \\
\And 
Jingfeng Wu $^*$  \\
Department of Computer Science \\
Johns Hopkins University \\
Baltimore, MD \\
\texttt{uuujf@jhu.edu} \\
\And 
Vladimir Braverman \\
Department of Computer Science\\
Rice University\\
Houston, TX \\
\texttt{vb21@rice.edu} \\
}
\begin{document}

\maketitle

\begin{abstract}
We consider a continual learning (CL) problem with two linear regression tasks in the fixed design setting, where the feature vectors are assumed fixed and the labels are assumed to be random variables.
We consider an $\ell_2$-regularized CL algorithm, which computes an Ordinary Least Squares parameter to fit the first dataset, then computes another parameter that fits the second dataset under an $\ell_2$-regularization penalizing its deviation from the first parameter, and outputs the second parameter.
For this algorithm, we provide tight bounds on the average risk over the two tasks.
Our risk bounds reveal a provable trade-off between forgetting and intransigence of the $\ell_2$-regularized CL algorithm: with a large regularization parameter, the algorithm output forgets less information about the first task but is intransigent to extract new information from the second task; and vice versa. 
Our results suggest that catastrophic forgetting could happen for CL with dissimilar tasks (under a precise similarity measurement) 
and that a well-tuned $\ell_2$-regularization can partially mitigate this issue by introducing intransigence.
\end{abstract}

\section{Introduction}

In \emph{continual learning} (CL, also known as lifelong learning), an agent is provided with multiple learning tasks in a \emph{sequential} manner and aims to solve these tasks with \emph{limited long-term memory}.
Due to the shortage of long-term memory, a CL problem is fundamentally more challenging than a single-task learning problem, as the CL agent is unable to memorize all data (neither can the agent memorize multiple task-specific solutions respectively) and thus a CL problem cannot be reduced to a single-task (i.e., the combination of all tasks) learning problem \citep{parisi2019continual}. 
On the other hand, without using exceedingly large long-term memory, one may view a CL problem as an online multi-task problem and fits a model with the data provided for each task, sequentially. 
However, the final model outputted by such an online approach could be poor for CL, as the performance is evaluated with respect to all seen tasks, but the final model tends to forget information from earlier tasks.
This issue is known as \emph{catastrophic forgetting} \citep{mccloskey1989catastrophic}.

One effective category of CL methods mitigates catastrophic forgetting by \emph{regularization}.
Specifically, the CL agent stores a copy of the latest model in the long-term memory, and when fitting a new model for an incoming task, a regularization is imposed to prevent the model from deviating too much from the old one
\citep{kirkpatrick2017overcoming, aljundi2018memory, zenke2017continual}.
The regularization-based methods temper catastrophic forgetting by lowering its flexibility of fitting new data (aka, \emph{intransigence}): with a strong regularization strength, the agent tends to memorize old tasks more and learn the new task less; and vice versa for a weaker regularization strength. 
When the regularization strength is properly set, the regularization-based methods can learn new tasks well without forgetting old tasks catastrophically, hence achieving good performance for CL.
Understanding the \emph{trade-off} between \emph{forgetting} with \emph{intransigence} is the key to understanding the effectiveness of the regularization-based CL methods.   

\paragraph{Contributions.}
This work theoretically investigates the trade-off between forgetting and intransigence induced by a regularization-based CL algorithm.
Specifically, 
we consider a (domain-incremental) CL problem with two linear regression tasks in the fix design setting (see Assumption \ref{assump:fixed-design}). 
We consider an $\ell_2$-regularized CL algorithm ($\ell_2$-RCL), where an $\ell_2$-regularization is imposed when learning the second task, penalizing the model deviating from the one learned from the first task (see \eqref{eqn:regularized-learning}).
We make the following contributions:
\begin{enumerate}[leftmargin=*]
    \item We provide sharp risk bounds for $\ell_2$-RCL. This implies a provable trade-off between forgetting and intransigence, which is controlled by the regularization parameter: a strong regularization effectively reduces forgetting but unavoidably enlarges intransigence and vice versa. 
    \item We show that, without regularization, forgetting could become catastrophic when the two tasks are dissimilar (under a precise task similarity measurement). Moreover, for CL with moderately dissimilar tasks, the catastrophic forgetting can be tempered with a well-tuned $\ell_2$-regularization by introducing a tolerable intransigence.
    \item Finally, we show that there exists a CL problem with extremely dissimilar tasks that cannot be solved by $\ell_2$-RCL. In this case, forgetting and intransigence are in strong tension and cannot be reduced simultaneously. This indicates the necessity of considering a more advanced regularization for CL. 
\end{enumerate}
Our theoretical results are validated with numerical simulations. All proofs are deferred to the appendix.

\section{Preliminaries}

\paragraph{Two linear regression tasks.}
We set up a 2-task CL problem with two linear regression problems -- our results can be easily extended to CL with more than two tasks.
Let 
$\big( \xB^{(1)}_t, y^{(1)}_t\big)_{t=1}^n \in \Rbb^d \times \Rbb$ be $n$ pair of data vector and label variable that are drawn independently from a population distribution $\Dcal^{(1)}$, where $d>0$ refers to the ambient dimension.
Similarly, let $\big( \xB^{(2)}_t, y^{(2)}_t\big)_{t=1}^n \in \Rbb^d \times \Rbb$ be another $n$ pair of data vector and label variable that are drawn independently from a different population distribution $\Dcal^{(2)}$.
For simplicity, we denote  
\[
\XB^{(k)} := \big(\xB^{(k)}_1,\dots, \xB^{(k)}_n\big)^\top\in \Rbb^{n\times d},\quad \yB^{(k)} := \big(y^{(k)}_1,\dots,y^{(k)}_n \big)\in\Rbb^n,
\]
for $k=1,2$.
We use $\wB\in\Rbb^d$ to denote a model parameter. 
Then we define the population risks for the two linear regression tasks by 
\begin{align*}
    \Rcal_1(\wB) := \frac{1}{n} \Ebb_{\Dcal^{(1)}} \| \yB^{(1)}-\XB^{(1)}\wB \|_2^2,\qquad
    \Rcal_2(\wB) := \frac{1}{n} \Ebb_{ \Dcal^{(2)}} \|\yB^{(2)}-\XB^{(2)}\wB\|_2^2,
\end{align*}
respectively.

\paragraph{Continual learning.}
The goal of CL is to learn a parameter to minimize the \emph{joint population risk}
\begin{equation}\label{eqn:risk}
    \Rcal(\wB)  :=  \Rcal_1(\wB) + \Rcal_2(\wB).
\end{equation}
Unlike other multi-task learning problems, in CL the agent learns the parameter under the constraints that the task-specific datasets can only be accessed sequentially and that its long-term memory is limited. 
Specifically, a two-task CL problem involves two local learning phases and a memory consolidation phase \citep{kirkpatrick2017overcoming, zenke2017continual}.
A CL algorithm begins with the first local learning phase, where 
the CL agent is provided with the first dataset $\big(\XB^{(1)}, \yB^{(1)}\big)$ and is allowed to do arbitrary computations with the dataset.
After that, the agent is switched to the memory consolidation phase and must delete all of its local data except for a $d$-dimensional vector, which we call \emph{long-term memory}.
Then the agent enters the second local learning phase, where it retains the long-term memory and is provided with the second dataset $\big(\XB^{(2)}, \yB^{(2)}\big)$, and is allowed to do arbitrary computation with the $d$-dimensional long-term memory vector and the dataset.
The aim of the CL agent is to report a model (parameter) that achieves a small joint population risk \eqref{eqn:risk}.

\begin{remark}[Memory size]
In the above CL setup, we only allow a $d$-dimensional long-term memory vector.
On the one hand, this amount of memory is necessary for storing a $d$-dimensional model parameter (without additional knowledge). 
On the other hand, the condition can be relaxed to allow a $kd$-dimensional long-term memory vector for $k>1$, which enables more powerful CL algorithms (e.g., EWC \citep{kirkpatrick2017overcoming}, SCP \citep{kolouri2020sliced}, sketched EWC \citep{li2021lifelong}, etc.).
As this paper initiates the theoretical framework for CL, we focus on the simpler setting and leave the general cases as future works. 
\end{remark}

We now introduce two CL methods, called by us \emph{ordinary continual learning} (OCL) and \emph{$\ell_2$-regularized continual learning} ($\ell_2$-RCL), respectively.

\paragraph{Ordinary continual learning.}
In the first local learning phase,
the \emph{ordinary continual learning} (OCL) performs \emph{ordinary least square} \citep{bartlett2020benign} with the first dataset. 
In the memory consolidation phase, it saves the obtained model parameter (an $d$-dimensional vector, denoted by $\wB^{(1)}$) as its long-term memory.
Then in the second local learning phase, it computes a model parameter (denoted by $\wB^{(2)}$) that fits the second dataset while minimizing the $\ell_2$-deviation from the previous model parameter (i.e., the long-term memory vector $\wB^{(1)}$).
Specifically, OCL outputs $\wB^{(2)}$ such that
\begin{equation}\label{eqn:sequential-learning}
    \wB^{(1)} = \big({\XB^{(1)}}^{\top}\XB^{(1)}\big)^{-1} {\XB^{(1)}}^\top \yB^{(1)}; \qquad
    \wB^{(2)} = \wB^{(1)} + \big({\XB^{(2)}}^{\top}\XB^{(2)}\big)^{-1} {\XB^{(2)}}^\top \big(\yB^{(2)} - \XB^{(2)}\wB^{(1)} \big).
\end{equation}
It is worth noting that OCL has been considered in \citet{evron2022catastrophic}. However, their focus is on the optimization behaviors of OCL, while we focus on its statistical behaviors. 

\begin{remark}[OCL in the interpolation regime]
When the number of parameters exceeds the number of samples, i.e., $d>2n$, then both datasets can be interpolated almost surely  \citep{bartlett2020benign}. 
In the interpolation regime, \eqref{eqn:sequential-learning} corresponds to 
\begin{equation*}
    \wB^{(1)} = \argmin_{\wB: \yB^{(1)} = \XB^{(1)} \wB} \| \wB \|_2^2; \qquad
    \wB^{(2)} = \argmin_{\wB: \yB^{(2)} = \XB^{(2)} \wB} \| \wB - \wB^{(1)} \|_2^2.
\end{equation*}
Our results make no assumption on $d$ and can be applied in both underparameterized and overparameterized cases. 
\end{remark}

\paragraph{$\ell_2$-Regularized continual learning.}
The $\ell_2$-regularized continual learning ($\ell_2$-RCL) is identical to OCL in the first local learning phase and the memory consolidation phase. 
In the second local learning phase, however, the agent computes a model parameter that fits the second dataset under an $\ell_2$-penalty from deviating from the previous model parameter.  
Specially, $\ell_2$-RCL outputs $\wB^{(2)}$ such that
\begin{equation}\label{eqn:regularized-learning}
    \wB^{(1)} = \big({\XB^{(1)}}^{\top}\XB^{(1)}\big)^{-1} {\XB^{(1)}}^\top \yB^{(1)}; \qquad
    \wB^{(2)} = \argmin_{\wB}  \frac{1}{n} \| \yB^{(2)} - \XB^{(2)} \wB \|_2^2 + \mu \| \wB - \wB^{(1)} \|_2^2,
\end{equation}
where $\mu>0$ is a hyperparameter.
The $\ell_2$-RCL has been studied by \citet{heckel2022provable}.
However, they focus on the existence of continual learning settings that the $\ell_2$-RCL training can provably succeed or fail, while we focus on the success and failure condition of the $\ell_2$-RCL algorithm.
We also note that the $\ell_2$-RCL algorithm is a special case of the EWC algorithm \citep{kirkpatrick2017overcoming} when the covariance matrix of the first dataset is isotropic (i.e., all eigenvalues are equal).
Finally, we note that the $\ell_2$-RCL output reduces to OCL when $\mu\to0$.

\paragraph{A forgetting-intransigence decomposition.}
A CL algorithm targets to minimize a joint population risk (e.g., \eqref{eqn:risk} in our setting). 
Additionally, in practice, the performance of a CL algorithm is often measured by the following two metrics: (1) \emph{forgetting}, which refers to the amount of performance degradation with respect to the old tasks after learning a new task, and (2) \emph{intransigence}, which evaluates the (in)ability of the algorithm to fit a new task after having already fitted a sequence of old tasks \citep{lopez2017gradient,chaudhry2018riemannian}.
Heuristically, forgetting is believed to be in tension with intransigence, and a good CL algorithm achieves a balance between forgetting and intransigence \citep{lopez2017gradient,chaudhry2018riemannian}. 

Our main results theoretically characterize the trade-off between forgetting and intransigence for the $\ell_2$-RCL algorithm.
Specifically, let $\wB^{(2)}$ and $\wB^{(1)}$ be given by \eqref{eqn:regularized-learning}, then we define forgetting and 
intransigence by 
\begin{equation}\label{eqn:F+I}
\begin{aligned}
    \Fcal\big(\wB^{(2)}, \wB^{(1)}\big) 
    &:= \Rcal_1\big(\wB^{(2)}\big) - \Rcal_1\big(\wB^{(1)}\big),\\ 
    \Ical\big(\wB^{(2)}, \wB^{(1)}\big) 
     &:= \Rcal_2\big(\wB^{(2)}\big) - \min \Rcal_2 (\cdot) + \Rcal_1\big(\wB^{(1)}\big) - \min \Rcal_1(\cdot),
\end{aligned}
\end{equation}
respectively.
The following \emph{forgetting-intransigence decomposition} holds by a direct calculation:
\begin{equation}\label{eqn:risk=F+I}
\Rcal \big( \wB^{(2)} \big) -\min \Rcal (\cdot )
= \Fcal\big(\wB^{(2)}, \wB^{(1)}\big) + \Ical\big(\wB^{(2)}, \wB^{(1)}\big).
\end{equation}
This suggests that: for the $\ell_2$-RCL output \eqref{eqn:regularized-learning} to induce a small CL risk \eqref{eqn:risk}, it is necessary that the induced forgetting in \eqref{eqn:F+I} is small (or negative), and that the induced intransigence in \eqref{eqn:F+I} is also small. 
In the next section, we will show sharp bounds for both forgetting and intransigence.

\begin{remark}[Generalization to CL with $T$ tasks]
The forgetting-intransigence decomposition \eqref{eqn:F+I} is defined for a CL problem with only two tasks. 
This can be generalized easily.
Specifically, let $\wB^{(t)}$ be a sequence of model parameters generated by a CL algorithm learning the $t$-th task, $t=1,\dots,T$. Then we can define 
\begin{equation*}
    \Fcal\big( \wB^{(t)},t=1,\dots,T \big) 
    := \sum_{t=1}^{T-1} \Rcal_{t}\big(\wB^{(T)}\big) - \Rcal_{t}\big(\wB^{(t)}\big),\quad 
    \Ical\big(\wB^{(t)},t=1,\dots,T \big) 
    := \sum_{t=1}^{T} \Rcal_t\big(\wB^{(t)}\big) - \min \Rcal_t (\cdot).
\end{equation*}
Similarly, the following \emph{forgetting-intransigence decomposition} holds:
\begin{align*}
    \sum_{t=1}^T \Rcal_t\big(\wB^{(T)}\big) - \min \Rcal_t (\cdot)
    = \Fcal\big( \wB^{(t)},t=1,\dots,T \big)  +  \Ical\big(\wB^{(t)},t=1,\dots,T \big).
\end{align*}
This work focuses on $T=2$ for the conciseness of a theoretical treatment.
\end{remark}

\section{Main Results}\label{sec:main}
Before presenting the main results, we first review a set of assumptions for our analysis.

\subsection{Assumptions}

\begin{assumption}[Fixed design]\label{assump:fixed-design}
Assume that the feature vectors $\big(\xB_t^{(1)}\big)_{t=1}^n$ and $\big(\xB_t^{(2)}\big)_{t=1}^n$ are fixed and that the labels $\big(y_t^{(1)}\big)_{t=1}^n$ and $\big(y_t^{(2)}\big)_{t=1}^n$ are independent random variables. 
\end{assumption}
This work focuses on a fixed design setting as described by Assumption \ref{assump:fixed-design}, where features are fixed but the labels are still random variables \citep{hsu2012random}.
In comparison, the work by \citet{evron2022catastrophic} assumed that both features and labels are fixed.
As will be demonstrated later, even in the fixed design we already show very interesting statistic properties of CL that are unknown before our work. 
We leave the more interesting, a random design \citep{hsu2012random} analysis of CL
as a future direction.

\begin{assumption}[Shared optimal parameter]\label{assump:shared-optimal}
    Assume that there exists a $\wB^*\in\Rbb^d$ such that 
    \[\wB^*\in \arg\min \Rcal_1(\wB),\quad \wB^*\in \arg\min \Rcal_2(\wB).\]
\end{assumption}
We assume there is a common optimal parameter for the two tasks. 
This corresponds to the \emph{domain-incremental CL} \citep{van2019three}.
In comparison, \citet{evron2022catastrophic} also assumed such a condition when studying the optimization behavior of CL.
However, \citet{heckel2022provable, lin2023theory} considered a more general CL setting, where the optimal model parameters might be different for different tasks.
As shown later, the domain-incremental setting already reflects a key statistical trade-off between forgetting and intransigence in CL.

The next assumption is standard in linear regression literature, which specifies a well-specified additive label noise. 
\begin{assumption}[Well-specified noise]\label{assump:noise}
    Assume that for the $\wB^*$ in Assumption \ref{assump:shared-optimal}, it holds that: for $k=1,2$ and $t=1,\dots,n$, 
    \begin{equation*}
    \Ebb [y^{(k)}_t ]= {\xB^{(k)}_t}^\top \wB^*,\quad 
    \sigma^2 := \Ebb \big(y^{(k)}_t - {\xB^{(k)}_t}^\top \wB^* \big)^2,
    \end{equation*}
    where $\sigma^2>0$ refers to the variance of the label noise.
\end{assumption}
For conciseness, we assume that the two tasks have the same noise level. This is not restrictive, and our results can be directly generalized to allow different noise levels as well.  

\begin{assumption}[Commutable data covariance matrices]\label{assump:commutable}
Assume that \[\HB^{(1)} \HB^{(2)} = \HB^{(2)} \HB^{(1)},\ \ \text{where}\ \HB^{(1)} := \frac{1}{n}  {\XB^{(1)}}^\top \XB^{(1)} \ \text{and}\ \HB^{(2)} := {1 \over n} {\XB^{(2)}}^\top \XB^{(2)}.\] 

Denote the eigenvalues of $\HB^{(1)}$ and $\HB^{(2)}$ by $\big( \lambda_i^{(1)} \big)_{i=1}^d$ and $\big( \lambda_i^{(2)} \big)_{i=1}^d$, respectively. 
Denote the corresponding eigenvectors to be $(\vB_i)_{i=1}^d$ .
\end{assumption}
In this work, we assume that the data covariance matrices are commutable.
This is a common assumption in transfer learning literature, see, e.g., \citet{lei2021near} and Section 4 in \citet{wu2022power}.
Note that this assumption does not requires $\HB^{(1)}$ and $\HB^{(2)}$ to be diagonal.
Extensions to our results by relaxing this assumption are left as an open technical problem.

\paragraph{Notations.}
For two functions $f(x) \ge 0$ and $g(x) \ge 0$ defined for $x \in [0, \infty)$,
we write $f(x) = \Ocal (g(x))$ if $\lim_{x\to\infty} f(x) / g(x) < c$ for some absolute constant $c  >0$;
we write $f(x) = \Omega(g(x))$ if $g(x) = \Ocal (f(x))$;
and we write $f(x) = \Theta(g(x))$ if $f(x) = \Ocal (g(x))$ and $g(x) = \Ocal(f(x))$.
Moreover, we write $f(x) = o (g(x))$ if $\lim_{x\to\infty} f(x) / g(x) = 0$.

For a matrix 
\(\AB := \sum_{i=1}^d a_i \vB_i \vB_i^\top\),
where $(\vB_i)_{i\ge 1}$ are the eigenvectors of $\HB^{(1)}$ (and $\HB^{(2)}$) and $(a_i)_{i\ge 1}$ are non-negative scalars, and 
an index set $\Kbb \subset \{1,\dots,d\}$,
we define
\[
\AB_{\Kbb} := \sum_{i\in \Kbb} a_i \vB_i \vB_i^\top,\quad 
\AB_{\Kbb}^{-1} := \sum_{i\in \Kbb, a_i > 0} \frac{1}{a_i}\vB_i \vB_i^\top.
\]

\subsection{Risk Bounds for Joint Learning}
When the two datasets can be accessed simultaneously, one can solve the two-task learning problem by \emph{joint learning} (JL), which outputs $\wB_{\mathtt{joint}}$ such that 
\begin{equation}\label{eqn:joint-learning}
    \wB_{\mathtt{joint}} = \argmin_{\wB: \yB^{(1)} = \XB^{(1)} \wB, \yB^{(2)} = \XB^{(2)} \wB} \| \wB  \|_2^2.
\end{equation}
Note that JL is not a practical CL method, as it requires either simultaneous access to both datasets or equivalently, a $nd$-dimensional long-term memory vector for storing the first dataset.
As JL uses strictly more information than every CL algorithm, it can serve as an imaginary ``upper'' baseline method that describes the best possible outcome of solving a CL problem \citep{farajtabar2020orthogonal, mirzadeh2020understanding, saha2021gradient}.
We follow this convention and limit ourselves to considering only the CL problems that can be solved efficiently by JL. 
To this end, the following proposition provides a risk bound for solving two linear regression tasks with JL.

\begin{proposition}[A risk bound for JL]\label{prop:joint-learning}
Suppose that Assumptions \ref{assump:fixed-design} to \ref{assump:commutable} hold.
Then for the JL output \eqref{eqn:joint-learning}, it holds that
    \begin{equation*}
        \expect{\Rcal( \wB_\mathtt{joint} )} - \Rcal(\wB^*) = {\sigma^2 \over n} \cdot \mathrm{rank}(\HB^{(1)} + \HB^{(2)}).
    \end{equation*}
\end{proposition}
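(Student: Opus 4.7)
The plan is to reduce joint learning to a single min-norm least squares problem on the stacked design and then carry out a standard bias--variance decomposition. Let $\XB := \bigl(\XB^{(1)\top},\, \XB^{(2)\top}\bigr)^\top \in \Rbb^{2n\times d}$, $\yB := \bigl(\yB^{(1)\top},\, \yB^{(2)\top}\bigr)^\top \in \Rbb^{2n}$, and $\boldsymbol{\epsilon} := \yB - \XB \wB^*$, which by Assumptions~\ref{assump:shared-optimal} and~\ref{assump:noise} has mean zero and covariance $\sigma^2 \mathbf{I}_{2n}$. Under these notations $\wB_{\mathtt{joint}} = \XB^{+}\yB$, the Moore--Penrose pseudoinverse solution; this agrees with~\eqref{eqn:joint-learning} whenever the interpolation constraint is feasible and is its natural extension otherwise. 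A routine calculation using $\Ebb[\boldsymbol{\epsilon}] = 0$ yields the closed-form excess risk
\[
\Rcal(\wB) - \Rcal(\wB^*) = (\wB - \wB^*)^\top \bigl(\HB^{(1)} + \HB^{(2)}\bigr)(\wB - \wB^*) = \tfrac{1}{n}\,\|\XB(\wB - \wB^*)\|_2^2,
\]
so it suffices to evaluate $\Ebb\bigl[\tfrac{1}{n}\|\XB(\wB_{\mathtt{joint}} - \wB^*)\|_2^2\bigr]$.

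I would then split
\[
\wB_{\mathtt{joint}} - \wB^* = \bigl(\XB^{+}\XB - \mathbf{I}_d\bigr)\wB^* + \XB^{+}\boldsymbol{\epsilon}
\]
into a deterministic bias and a mean-zero variance part. The key observation is that $\mathbf{I}_d - \XB^{+}\XB$ is the orthogonal projector onto $\mathrm{null}(\XB) = \mathrm{null}(\XB^{(1)}) \cap \mathrm{null}(\XB^{(2)})$, so $\XB(\XB^{+}\XB - \mathbf{I}_d)\wB^* = 0$ and the bias contributes zero to the excess risk. For the variance part I would use that $\XB\XB^{+}$ is the orthogonal projector onto $\mathrm{range}(\XB)\subset\Rbb^{2n}$, whence
\[
\Ebb\Bigl[\tfrac{1}{n}\,\|\XB\XB^{+}\boldsymbol{\epsilon}\|_2^2\Bigr] = \tfrac{\sigma^2}{n}\,\mathrm{tr}\bigl(\XB\XB^{+}\bigr) = \tfrac{\sigma^2}{n}\,\mathrm{rank}(\XB),
\]
and finally $\mathrm{rank}(\XB) = \mathrm{rank}(\XB^\top\XB) = \mathrm{rank}\bigl(\HB^{(1)} + \HB^{(2)}\bigr)$ closes the argument.

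There is no real obstacle here: the whole computation is a textbook bias--variance decomposition for a min-norm OLS estimator, and, notably, Assumption~\ref{assump:commutable} plays no role---joint learning sees only the combined row space of $\XB$ and cannot be hurt by any spectral mismatch between $\HB^{(1)}$ and $\HB^{(2)}$. The only interpretive subtlety is the underdetermined regime where~\eqref{eqn:joint-learning} may be strictly infeasible (e.g., when $\yB \notin \mathrm{range}(\XB)$); reading $\wB_{\mathtt{joint}}$ as the Moore--Penrose solution is the standard remedy and leaves every identity above unchanged. The contrast with the forthcoming $\ell_2$-RCL analysis---where the interplay between $\HB^{(1)}$ and $\HB^{(2)}$ must drive both forgetting and intransigence---is already foreshadowed by how cleanly this proof goes through.
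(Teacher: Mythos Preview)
Your proof is correct and follows essentially the same route as the paper: stack the two datasets, write $\wB_{\mathtt{joint}}$ as the pseudoinverse solution, perform the bias--variance split, observe that the bias vanishes because $\IB-\XB^{+}\XB$ projects onto $\mathrm{null}(\XB)$, and read off the variance as $\frac{\sigma^{2}}{n}\,\mathrm{rank}(\XB)$. Your side remark that Assumption~\ref{assump:commutable} is not actually used here is also correct---the paper's own proof never invokes it either.
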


\paragraph{The effect of JL.}
According to Proposition \ref{prop:joint-learning}, for JL to achieve an $o(1)$ excess risk, it is necessary (and also sufficient) to have that 
\begin{equation}\label{eqn:JL-condition}
    \mathrm{rank}(\HB^{(1)}) = o(n),\quad \mathrm{rank}(\HB^{(2)}) = o(n).
\end{equation}
As discussed earlier, for a CL problem to be learnable by a valid CL algorithm, it has to be learnable by the JL algorithm. 
Therefore, in what follows when we analyze CL algorithms, we will only focus on the CL problems where \eqref{eqn:JL-condition} holds.

\subsection{Risk Bounds for Continual Learning} \label{sec:risk-CL}

In this part, we present our main results.
We first provide general risk bounds (Theorem \ref{thm:regularized-learning})  for $\ell_2$-RCL \eqref{eqn:regularized-learning} with any regularization parameter $\mu \ge 0$; in particular, they allow $\mu=0$, therefore, cover OCL \eqref{eqn:sequential-learning} as a special case.

\begin{theorem}[A risk bound for $\ell_2$-RCL/OCL]\label{thm:regularized-learning}
    Suppose that Assumptions \ref{assump:fixed-design} to \ref{assump:commutable} hold.
    Then for the $\ell_2$-RCL output \eqref{eqn:regularized-learning}, it holds that 
    \begin{equation*}
        \Ebb\big[ \Rcal\big( \wB^{(2)} \big) \big]  - \min \Rcal (\cdot) = \Ebb \big[ \Fcal\big(\wB^{(2)}, \wB^{(1)}\big) \big] + \Ebb \big[ \Ical\big(\wB^{(2)}, \wB^{(1)}\big) \big] .
    \end{equation*}
    Moreover, it holds that
    \begin{align*}
        \Ebb \big[ \Fcal\big(\wB^{(2)}, \wB^{(1)}\big) \big]
        &=  \frac{\sigma^2}{n} \cdot \dimF, \\
        \Ebb \big[ \Ical\big(\wB^{(2)}, \wB^{(1)}\big) \big] 
        &= \bias + \frac{\sigma^2}{n} \cdot \dimI,
    \end{align*}
    where
    \begin{equation}\label{eqn:eff-dim}
    \begin{aligned}
        \dimF & :=   \tr\Big\{\IB_{\Kbb}\cdot \big( \HB^{(1)}-\HB^{(2)}-2\mu\IB\big) \cdot \HB^{(2)}\cdot \big( \HB^{(2)} + \mu \IB \big)^{-2} \Big\}, \\
        \dimI & := \tr\Big\{ \big( \mu^2 \cdot {\HB^{(1)}}^{-1} + \HB^{(2)}\big)\cdot \HB^{(2)}\cdot \big( \HB^{(2)} + \mu\IB \big)^{-2} \Big\} + \rank\big\{ \HB^{(1)} \big\}, \\
        \bias & := (\wB^*)^\top \cdot \IB_{\Kbb^c} \cdot \mu^2 \cdot \HB^{(2)} \cdot \big(  {\HB^{(2)} +  \mu\IB}\big)^{-2} \cdot \wB^*,
    \end{aligned}
    \end{equation}
    for index sets $\Kbb := \big\{ i: \lambda_i^{(1)} >  0 \big\}$.
    
    In particular, when the regularization parameter $\mu = 0$, i.e., for the OCL output \eqref{eqn:sequential-learning}, it holds that
    \begin{equation}\label{eqn:eff-dim-ocl}
    \dimF =  \tr\big\{ \HB^{(1)}{\HB^{(2)}}^{-1} \big\} - \rank \big\{\HB^{(1)}\HB^{(2)} \big\},\quad 
    \dimI = \rank\big\{ \HB^{(2)} \big\} + \rank\big\{ \HB^{(1)} \big\},\quad 
    \bias = 0.
    \end{equation}
\end{theorem}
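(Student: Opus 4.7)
The plan is to derive a closed-form expression for $\wB^{(2)} - \wB^*$ as the sum of a deterministic bias and two mutually independent noise pieces, evaluate the expected quadratic forms $\Ebb[(\wB^{(2)}-\wB^*)^\top \HB^{(k)}(\wB^{(2)}-\wB^*)]$ for $k=1,2$, and match the resulting traces to $\dimF$, $\dimI$, and $\bias$ using Assumption~\ref{assump:commutable}. Throughout, all matrix inverses are read as Moore--Penrose pseudo-inverses (consistent with the convention for $\AB_{\Kbb}^{-1}$ in the Notations paragraph) whenever the underlying matrix is rank-deficient.

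First I would let $\epsilon^{(k)} \in \Rbb^n$ denote the residual vector $\yB^{(k)} - \XB^{(k)}\wB^*$; under Assumption~\ref{assump:noise} the $\epsilon^{(k)}$ are independent across $k$, mean-zero, and have covariance $\sigma^2 \IB$. Direct substitution into the OLS formula in \eqref{eqn:regularized-learning} gives $\wB^{(1)} = \IB_{\Kbb}\wB^* + \XB^{(1)\dagger}\epsilon^{(1)}$, which in particular yields $\Ebb[\Rcal_1(\wB^{(1)}) - \min\Rcal_1(\cdot)] = \tfrac{\sigma^2}{n}\rank(\HB^{(1)})$---this supplies the $+\rank(\HB^{(1)})$ piece of $\dimI$. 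The first-order condition for the second objective in \eqref{eqn:regularized-learning} gives $\wB^{(2)} = (\HB^{(2)}+\mu\IB)^{-1}(\mu\wB^{(1)} + \HB^{(2)}\wB^* + \tfrac{1}{n}\XB^{(2)\top}\epsilon^{(2)})$, and eliminating $\wB^{(1)}$ produces the decomposition
\begin{equation*}
\wB^{(2)} - \wB^* = -\mu(\HB^{(2)}+\mu\IB)^{-1}\IB_{\Kbb^c}\wB^* + \mu(\HB^{(2)}+\mu\IB)^{-1}\XB^{(1)\dagger}\epsilon^{(1)} + \tfrac{1}{n}(\HB^{(2)}+\mu\IB)^{-1}\XB^{(2)\top}\epsilon^{(2)}.
\end{equation*}
Since $\epsilon^{(1)} \perp \epsilon^{(2)}$ and both are centered, all cross terms vanish and every expected quadratic form splits into a bias-squared contribution plus two trace contributions.

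Next, for $\MB \in \{\HB^{(1)}, \HB^{(2)}\}$ I would evaluate each of the three contributions using the identities $\XB^{(1)\dagger}\XB^{(1)\dagger\top} = \tfrac{1}{n}(\HB^{(1)})^{-1}$ (on the range $\Kbb$) and $\tfrac{1}{n}\XB^{(2)\top}\XB^{(2)} = \HB^{(2)}$, together with Assumption~\ref{assump:commutable} to move $\HB^{(1)}$, $\HB^{(2)}$, $\IB_{\Kbb}$, and the resolvent $(\HB^{(2)}+\mu\IB)^{-1}$ past one another inside the traces. For intransigence ($\MB = \HB^{(2)}$), the bias contribution simplifies (using commutability of $\IB_{\Kbb^c}$ with $\HB^{(2)}$ and its resolvent) to exactly $\bias$, and the two noise traces combine with the $+\rank(\HB^{(1)})$ piece above into $\tfrac{\sigma^2}{n}\dimI$. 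For forgetting ($\MB = \HB^{(1)}$), the bias vanishes outright since $\IB_{\Kbb^c}\HB^{(1)} = 0$; the two noise traces, plus the $-\tfrac{\sigma^2}{n}\rank(\HB^{(1)})$ correction from $-\Ebb[\Rcal_1(\wB^{(1)}) - \min\Rcal_1(\cdot)]$, collapse to $\tfrac{\sigma^2}{n}\dimF$ after invoking the algebraic identity $\IB_{\Kbb} = \IB_{\Kbb}(\HB^{(2)}+\mu\IB)^2(\HB^{(2)}+\mu\IB)^{-2}$ to rewrite $\rank(\HB^{(1)}) = \tr(\IB_{\Kbb})$ under a common denominator and collecting powers of $\HB^{(2)}$. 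Specializing $\mu = 0$ and using $\tr(\IB_{\Kbb}\HB^{(2)}(\HB^{(2)})^{-1}) = \tr(\IB_{\Kbb}\IB_{\Kbb_2}) = \rank(\HB^{(1)}\HB^{(2)})$ (where $\Kbb_2 := \{i : \lambda_i^{(2)} > 0\}$) then yields \eqref{eqn:eff-dim-ocl}.

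The main obstacle will be the careful bookkeeping of the spectral projectors $\IB_{\Kbb}$ and $\IB_{\Kbb^c}$: they are precisely what separates a nonzero $\bias$ in the intransigence term from a zero bias contribution in forgetting, and what drives the $\rank(\HB^{(1)}\HB^{(2)})$ subtraction at $\mu=0$. Commutability is leveraged heavily---it effectively reduces every computation to scalar sums over the shared eigenbasis---but once the projectors are tracked correctly, the remaining manipulations are routine trace arithmetic.
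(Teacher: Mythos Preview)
Your proposal is correct and follows essentially the same route as the paper's proof: both compute $\wB^{(1)}-\wB^*$ and $\wB^{(2)}-\wB^*$ explicitly in terms of the noise vectors $\epsilonB^{(k)}$, take covariances (exploiting independence of $\epsilonB^{(1)}$ and $\epsilonB^{(2)}$), and then evaluate the quadratic forms $\langle \HB^{(k)},\,\Ebb(\wB^{(2)}-\wB^*)(\wB^{(2)}-\wB^*)^\top\rangle$ using Assumption~\ref{assump:commutable} to reduce everything to scalar trace identities over the shared eigenbasis. The only cosmetic difference is that you expand $\wB^{(1)}-\wB^*$ inside $\wB^{(2)}-\wB^*$ up front into three summands, whereas the paper keeps the covariance $\Ebb(\wB^{(1)}-\wB^*)(\wB^{(1)}-\wB^*)^\top$ as an intermediate object and substitutes it later; the algebra and the handling of the projectors $\IB_{\Kbb},\IB_{\Kbb^c}$ are otherwise identical.
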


Theorems \ref{thm:regularized-learning} tightly characterizes the performance of $\ell_2$-RCL and OCL methods in the fixed design setting.
Specifically, the CL risk is decomposed into a forgetting part and an intransigence part, and sharp bounds on both parts are established. 
In the following, we discuss the effects of OCL and $\ell_2$-RCL in turn. 

\paragraph{The effect of OCL.}

Consider \eqref{eqn:eff-dim-ocl} in Theorem \ref{thm:regularized-learning} that characterizes the forgetting and intransigence induced by OCL.
When the CL problem can be solved by JL, i.e., \eqref{eqn:JL-condition} holds, it is clear that $\dimI$ is small, i.e., OCL has a small intransigence. 
Then the performance of OCL is dominated by its forgetting according to Theorem \ref{thm:regularized-learning}. 
In particular, for OCL to achieve an $o(1)$ CL excess risk, it is sufficient (and also necessary, as $\rank \big\{\HB^{(1)}\HB^{(2)} \big\} = o(n)$ under \eqref{eqn:JL-condition}) to have 
\begin{equation}\label{eq:ocl:task-similarity}
     \dimF \le \tr\big\{ \HB^{(1)} {\HB^{(2)}}^{-1} \big\} = o(n). 
\end{equation}
This requires the small eigenvalue space of $\HB^{(2)}$ to be low-dimensional and/or to be aligned with the small eigenvalue space of $\HB^{(1)}$.
On the other hand, if \eqref{eq:ocl:task-similarity} does not hold, then OCL suffers from constant forgetting, i.e., catastrophic forgetting happens. 
From this perspective, \eqref{eq:ocl:task-similarity} provides a precise task similarity measurement, such that OCL is able to solve CL problems with similar tasks but is unable to solve CL problems with dissimilar tasks due to catastrophic forgetting.

For understanding the performance of $\ell_2$-RCL, the following corollary is useful.
\begin{corollary}[Risk upper bounds for $\ell_2$-RCL]\label{thm:rcl:upper-bound}
    Under the same setup of Theorem \ref{thm:regularized-learning}, it holds that
    \begin{equation}\label{eqn:eff-dim-upper-bound}
    \begin{aligned}
        \dimF
        &\le \tr \Big\{ \HB^{(1)} \cdot \HB^{(2)}\cdot \big( \HB^{(2)} + \mu \IB \big)^{-2}\Big\} \\
        &\le  \tr \Big\{ \HB^{(1)} \cdot  \big( \HB^{(2)} + \mu \IB \big)^{-1}\Big\},\\
        \dimI 
        &\le \tr\Big\{ \mu^2 \cdot {\HB^{(1)}}^{-1}\cdot \HB^{(2)}\cdot \big( \HB^{(2)} + \mu\IB \big)^{-2} \Big\} + \rank\big\{ \HB^{(2)} \big\} + \rank\big\{ \HB^{(1)}\big\} \\
        &\le \tr\Big\{ \mu \cdot {\HB^{(1)}}^{-1} \Big\} + \rank\big\{ \HB^{(2)} \big\} + \rank\big\{ \HB^{(1)}\big\},\\
        \bias 
        &\le \mu \cdot \| \wB^*\|_2^2.
    \end{aligned}
    \end{equation}
\end{corollary}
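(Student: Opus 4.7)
The plan is to exploit the simultaneous diagonalizability of $\HB^{(1)}$ and $\HB^{(2)}$ granted by Assumption~\ref{assump:commutable}. In a common eigenbasis, with eigenvalues $\big(\lambda_i^{(1)}, \lambda_i^{(2)}\big)_{i=1}^d$, every matrix appearing in \eqref{eqn:eff-dim} --- including the projectors $\IB_\Kbb$, $\IB_{\Kbb^c}$, and the pseudo-inverse ${\HB^{(1)}}^{-1}$ --- is simultaneously diagonal, so each trace reduces to a sum of scalar rational expressions in $(\lambda_i^{(1)}, \lambda_i^{(2)}, \mu)$ and each claimed bound collapses to an elementary scalar inequality.

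For the forgetting term, I would decompose $\HB^{(1)} - \HB^{(2)} - 2\mu\IB = \HB^{(1)} - (\HB^{(2)} + 2\mu\IB)$ and use the identity $\IB_\Kbb \HB^{(1)} = \HB^{(1)}$ to rewrite $\dimF$ as $\tr\{\HB^{(1)} \HB^{(2)} (\HB^{(2)}+\mu\IB)^{-2}\} - \tr\{\IB_\Kbb (\HB^{(2)}+2\mu\IB)\HB^{(2)} (\HB^{(2)}+\mu\IB)^{-2}\}$. The subtracted term is a trace of a product of commuting PSD matrices, hence non-negative; discarding it yields the first upper bound. The second bound follows from the operator inequality $\HB^{(2)} (\HB^{(2)}+\mu\IB)^{-2} \preceq (\HB^{(2)}+\mu\IB)^{-1}$, which is the scalar inequality $\lambda/(\lambda+\mu)^2 \le 1/(\lambda+\mu)$ applied eigenvalue-wise.

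For the intransigence term, I would split the trace into $\tr\{\mu^2 {\HB^{(1)}}^{-1} \HB^{(2)} (\HB^{(2)}+\mu\IB)^{-2}\}$ plus $\tr\{(\HB^{(2)})^2 (\HB^{(2)}+\mu\IB)^{-2}\}$. The second summand is a sum of quantities $(\lambda_i^{(2)})^2 / (\lambda_i^{(2)}+\mu)^2 \in [0,1]$ that vanish whenever $\lambda_i^{(2)} = 0$, so it is at most $\rank(\HB^{(2)})$, giving the first displayed bound. For the refinement, the AM-GM inequality $(\lambda+\mu)^2 \ge 4\lambda\mu$ yields $\mu^2 \lambda_i^{(2)} / (\lambda_i^{(2)}+\mu)^2 \le \mu/4 \le \mu$ eigenvalue-wise, so the first summand is at most $\mu \tr\{{\HB^{(1)}}^{-1}\}$. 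The same AM-GM estimate handles the bias: dropping the PSD projector $\IB_{\Kbb^c}$ only enlarges the quadratic form, and the operator norm of $\mu^2 \HB^{(2)}(\HB^{(2)}+\mu\IB)^{-2}$ is then at most $\mu/4 \le \mu$, giving $\bias \le \mu \|\wB^*\|_2^2$.

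There is no real obstacle here: the entire argument amounts to four elementary scalar inequalities lifted to traces via simultaneous diagonalization. The only care required is to respect the pseudo-inverse convention that ${\HB^{(1)}}^{-1}$ is supported on $\Kbb$, so that sums involving ${\HB^{(1)}}^{-1}$ range only over $i \in \Kbb$ with $\lambda_i^{(1)} > 0$, and to verify that all projectors and polynomial functions of $\HB^{(1)}, \HB^{(2)}$ mutually commute --- a direct consequence of Assumption~\ref{assump:commutable}.
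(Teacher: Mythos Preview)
Your proposal is correct. The paper does not include a separate proof of this corollary, treating the bounds as immediate consequences of the explicit formulas in Theorem~\ref{thm:regularized-learning}; your eigenvalue-by-eigenvalue verification via simultaneous diagonalization and the scalar inequalities $\lambda/(\lambda+\mu)^2 \le 1/(\lambda+\mu)$ and $(\lambda+\mu)^2 \ge 4\lambda\mu$ is exactly the natural way to make those implications explicit.
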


\paragraph{The effect of $\ell_2$-RCL.}
We now discuss the effect of $\ell_2$-RCL based on the forgetting and intransigence bounds in Theorem \ref{thm:regularized-learning} and their  upper bounds in Corollary \ref{thm:rcl:upper-bound}.
Firstly, according to \eqref{eqn:eff-dim} (or \eqref{eqn:eff-dim-upper-bound}), the regularization parameter $\mu$ adjusts the trade-off between the forgetting and intransigence parts in the risk bound of $\ell_2$-RCL.
Specifically, a large $\mu$ reduces $\dimF$ but promotes $\dimI$ and $\bias$, i.e., the forgetting becomes smaller but the intransigence becomes larger, and vice versa.
Our theory on the effect of the regularization parameter in $\ell_2$-RCL is consistent with the empirical observation for the regularization-based CL methods \citep{kirkpatrick2017overcoming,kolouri2020sliced,li2021lifelong}.

Secondly, we compare the effect of $\ell_2$-RCL with that of OCL.
According to Theorem \ref{thm:regularized-learning}, for $\ell_2$-RCL to achieve an $o(1)$ CL excess risk, it suffices to choose a regularization parameter $\mu$ such that $\dimF, \dimI \le o(n)$ and that $\bias \le o(1)$. 
Consider the upper bounds for these terms in \eqref{eqn:eff-dim-upper-bound}, and under the additional assumption that the problem can be solved by JL (i.e., \eqref{eqn:JL-condition} holds), 
it suffices to require that
\begin{equation}\label{eq:rcl:task-similarity}
     \tr \Big\{ \HB^{(1)} \cdot  \big( \HB^{(2)} + \mu \IB \big)^{-1}\Big\} \le o(n), \quad 
     \mu \cdot \tr\Big\{ {\HB^{(1)}}^{-1} \Big\} \le o(n),\quad 
    \mu \le o(1),
\end{equation}
holds for some $\mu\ge 0$.
Clearly, \eqref{eq:rcl:task-similarity} is a weaker set of requirements than \eqref{eq:ocl:task-similarity} as the latter corresponds to the former with $\mu=0$.
Moreover, even when the two tasks in CL are dissimilar such that OCL cannot achieve a small risk, i.e., \eqref{eq:ocl:task-similarity} does not hold, 
$\ell_2$-RCL is still possible to achieve a small risk as long as \eqref{eq:rcl:task-similarity} holds for some $\mu>0$.
For example, as long as $\tr\{\HB^{(1)}\} = o(n)$ and $\tr\{\HB^{(1)}\}\cdot \tr\{{\HB^{(1)}}^{-1}\} = o(n^2)$, \eqref{eq:rcl:task-similarity} can be made true for a well tuned $\mu$.
Note that in this example, $\HB^{(2)}$ might not be well aligned with $\HB^{(1)}$ and \eqref{eq:ocl:task-similarity} might not hold.
These suggest that a well-tuned $\ell_2$-RCL achieves a trade-off between forgetting and intransigence superior to that achieved by OCL.

Thirdly, it is worth noting that there exist CL problems that can be solved by JL but not by $\ell_2$-RCL, even with an optimally tuned regularization parameter.  
See, e.g., Corollary \ref{cor:regularized-limitation}. This demonstrates the limitation of $\ell_2$-RCL.

\begin{figure}[t]
    \centering
    \subfigure[$\mathtt{Q}(5,0)$]{\includegraphics[width=0.32\linewidth]{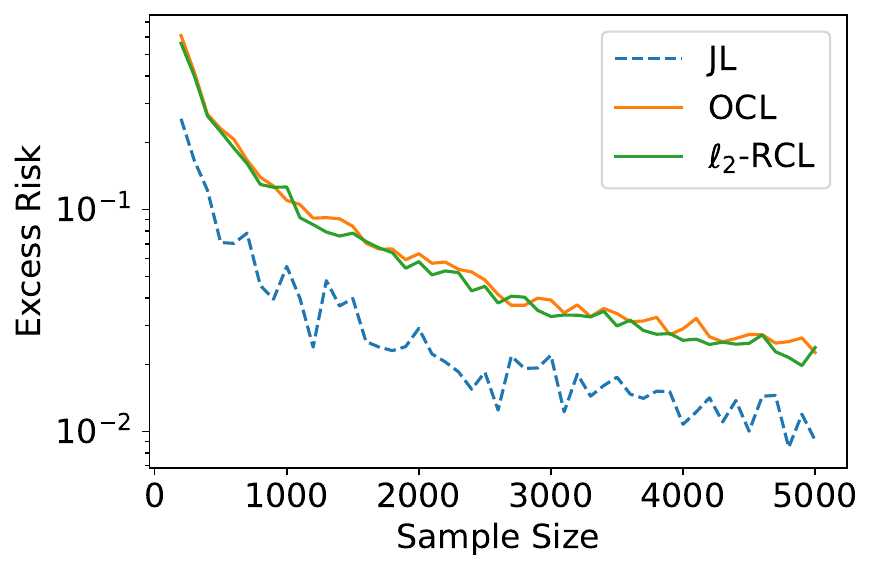}\label{fig:easy-change-n}}
    \subfigure[$\mathtt{Q}(15,0)$]{\includegraphics[width=0.32\linewidth]{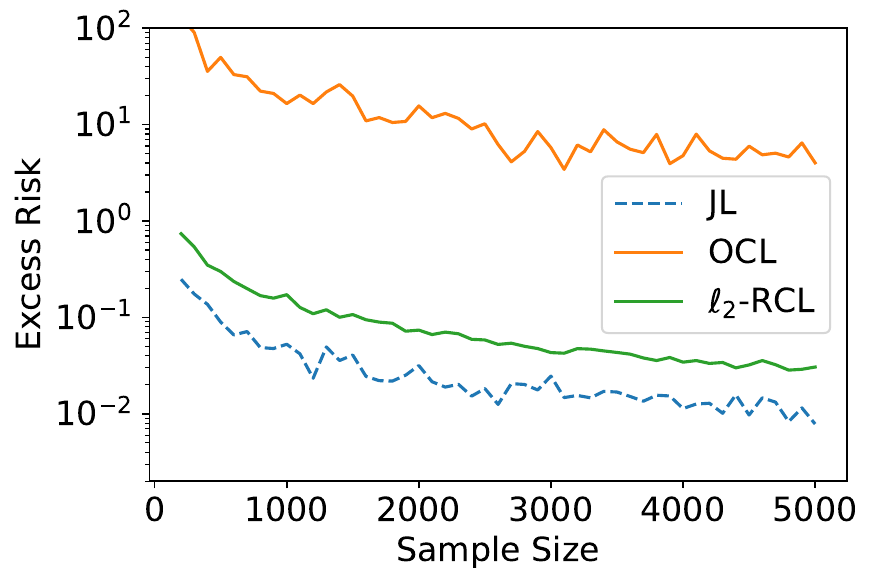}\label{fig:medium-change-n}}
    \subfigure[$\mathtt{Q}(15,15)$]{\includegraphics[width=0.32\linewidth]{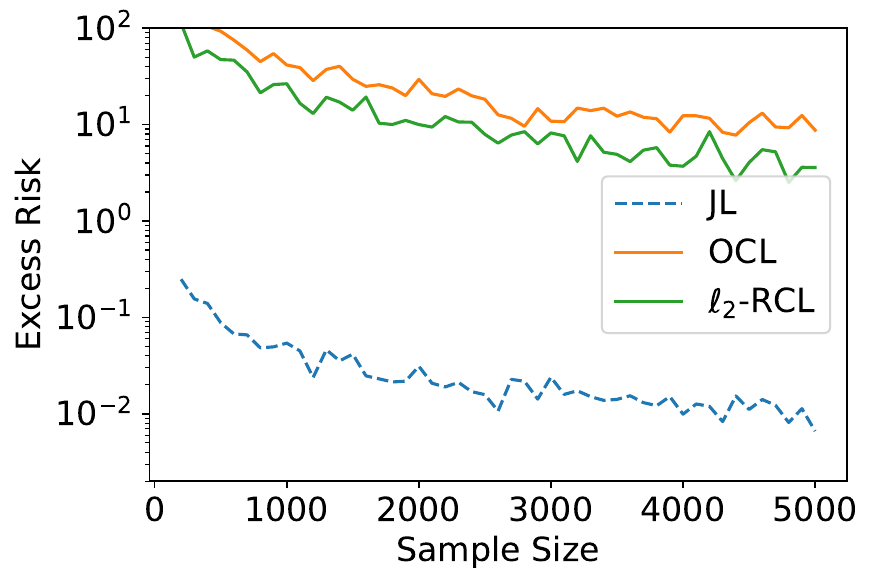}\label{fig:hard-change-n}}
    \caption{
    Expected excess risk vs.\ sample size for JL, OCL and $\ell_2$-RCL.
    For each point in each curve, the x-axis represents the sample size $n$ and the y-axis represents the expected CL excess risk (in the logarithmic scale).
    The problem instances $\mathtt{Q}(5,0)$, $\mathtt{Q}(15,0)$ and $\mathtt{Q}(15,15)$ are defined by \eqref{eqn:setting-q}.
    The regularization parameter in the $\ell_2$-RCL algorithm is optimally tuned. 
    The dimension of the task is $d=200$.
    The expectation of the excess risk is computed by taking an empirical average over $20$ independent runs.
    }
    \label{fig:change-n}
\end{figure}

\subsection{Examples and Numerical Simulations} 

In this part, we give specific examples to further validate the above discussions on the trade-off between forgetting and intransigence induced by OCL and $\ell_2$-RCL.
We show that catastrophic forgetting happens when the two tasks are dissimilar. 

\paragraph{When is forgetting catastrophic for OCL?}

We have shown that \eqref{eq:ocl:task-similarity} is a sufficient and necessary condition that determines when forgetting is catastrophic for OCL. 
We further demonstrate this with the following examples.

\begin{example}\label{cor:sequential-example}
    Under the same setting as Theorem \ref{thm:regularized-learning}, additionally, assume that 
    \[\sigma^2 = \Theta(1), \quad  
    \|\wB^*\|_2 = \Ocal(1), \quad  
    \rank \big\{ \HB^{(1)} \big\} =\Ocal(1),\quad 
    \rank \big\{ \HB^{(2)}\big\} =\Ocal(1). \] 
    Consider the OCL output \eqref{eqn:sequential-learning} (or \eqref{eqn:regularized-learning} with $\mu=0$), then:
    \begin{enumerate}[leftmargin=*]
        \item If $\|\HB^{(1)}\|_2 = \Ocal(1)$ and all non-zero eigenvalues of $\HB^{(2)}$ are $\Omega(n^{-1+r})$ for some constant $0<r\leq 1$, then $\Ebb[\Rcal(\wB^{(2)})] - \Rcal (\wB^*) = \bigO{n^{-r}}$. Moreover, it holds that 
        \[ \Ebb [ \Fcal(\wB^{(2)},\wB^{(1)})]  = \Ocal(n^{-r}),\quad  \Ebb [ \Ical(\wB^{(2)},\wB^{(1)})] = \Ocal(n^\inv).\]  
        \item If $\lambda_1^{(1)} = 1$ and $\lambda_1^{(2)} = n^\inv$, then $\Ebb[\Rcal(\wB^{(2)})] - \Rcal (\wB^*) = \bigOmega{1}$. Moreover, it holds that
        \[\Ebb [ \Fcal(\wB^{(2)},\wB^{(1)})]  = \Omega(1),\quad \Ebb [ \Ical(\wB^{(2)},\wB^{(1)})]  = \Ocal( n^\inv). \] 
    \end{enumerate}
\end{example}
In the first problem in Example \ref{cor:sequential-example}, $\HB^{(2)}$ is well conditioned, so it is ``similar'' to $\HB^{(1)}$ according to \eqref{eq:ocl:task-similarity}, and the forgetting is small for OCL. 
In the second problem in Example \ref{cor:sequential-example}, there exists a direction where $\HB^{(2)}$ and $\HB^{(1)}$ are highly dissimilar, then the forgetting becomes catastrophic for OCL.

\paragraph{When can $\ell_2$-RCL temper catastrophic forgetting?}
We now present an example where the forgetting is catastrophic for OCL but the issue can be mitigated by $\ell_2$-RCL via introducing intransigence.

\begin{example}\label{cor:regularized-discussion}
    Under the same setting as Theorem \ref{thm:regularized-learning}, additionally, assume that
    \begin{gather*}
        \wB^* = (0,1,0,0,\dots), \quad \sigma^2 = 1, \\
        \HB^{(1)} = \diag(1,0,\underbrace{1,\dots,1}_{\Theta(1) \textrm{ copies}},0,0,\dots), \quad \HB^{(2)} = \diag(n^\inv, n^{-2/3}, \underbrace{1,\dots,1}_{\Theta(1) \textrm{ copies}},0,0,\dots).
    \end{gather*}
    Then: 
    \begin{itemize}
        \item For the OCL output \eqref{eqn:sequential-learning}, it holds that 
        $\Ebb[\Rcal(\wB^{(2)})] - \Rcal (\wB^*) = \bigOmega{1}$; moreover,
        \[ \Ebb [ \Fcal(\wB^{(2)},\wB^{(1)})]  = \Omega(1),\quad  \Ebb [ \Ical(\wB^{(2)},\wB^{(1)})] = \Ocal(n^\inv).\]  
        \item For the $\ell_2$-RCL output \eqref{eqn:regularized-learning} with $\mu=n^{-2/3}$, it holds that  $\Ebb[\Rcal(\wB^{(2)})] - \Rcal (\wB^*) = \bigO{n^{-2/3}}$; moreover,
        \[ \Ebb [ \Fcal(\wB^{(2)},\wB^{(1)})]  = \Ocal(n^{-2/3}),\quad  \Ebb [ \Ical(\wB^{(2)},\wB^{(1)})] = \Ocal(n^{-2/3}).\]
    \end{itemize}
\end{example}
In Example \ref{cor:regularized-discussion}, we see that 
OCL suffers from a constant CL excess risk due to catastrophic forgetting, while $\ell_2$-RCL achieves an $o(1)$ CL excess risk by balancing forgetting and intransigence. 
This example illustrates that $\ell_2$-RCL is able to (partially) mitigate the issue of catastrophic forgetting by introducing intransigence.

\paragraph{When is $\ell_2$-RCL not able to temper catastrophic forgetting?}
We next present a CL problem where the forgetting and intransigence cannot be small simultaneously for $\ell_2$-RCL, even with an optimally tuned regularization parameter $\mu$.

\begin{example}\label{cor:regularized-limitation}
    Under the same setting as Theorem \ref{thm:regularized-learning}, additionally, assume that 
    \begin{gather*}
        \|\wB^*\|_2 \leq 1, \quad \sigma^2 = 1, \quad 
        \HB^{(1)} = \diag(1,n^{-2},0,0,\dots), \quad \HB^{(2)} = \diag(n^\inv, n^{-1}, 0, 0,\dots).
    \end{gather*}
    Then for the $\ell_2$-RCL output \eqref{eqn:regularized-learning} with any $\mu>0$, it holds that $\Ebb[\Rcal(\wB^{(2)})] - \Rcal (\wB^*) =\Omega(1)$.
    In special cases:
    \begin{itemize}
        \item if $\mu = \bigO{n^{-1.5}}$, then $\Ebb [ \Fcal(\wB^{(2)},\wB^{(1)})]  = \bigOmega{1}$ and $ \Ebb [ \Ical(\wB^{(2)},\wB^{(1)})]  = \bigO{n^\inv}$;
        \item if $\mu = \bigTheta{n^{-1}}$, then $\Ebb [ \Fcal(\wB^{(2)},\wB^{(1)})]  = \bigOmega{1}$ and $ \Ebb [ \Ical(\wB^{(2)},\wB^{(1)})]  = \bigOmega{1}$;
        \item if $\mu = \bigTheta{n^{-0.5}}$, then $\Ebb [ \Fcal(\wB^{(2)},\wB^{(1)})]  = \bigO{n^\inv}$ and $ \Ebb [ \Ical(\wB^{(2)},\wB^{(1)})]  = \bigOmega{1}$.
    \end{itemize}
\end{example}
Example \ref{cor:regularized-limitation} demonstrates the limitation of $\ell_2$-RCL: for $\ell_2$-RCL on some CL problems, 
while it is possible to choose a small $\mu$ such that the intransigence is small or a large $\mu$ such that the forgetting is small, it is not possible to make intransigence and forgetting small simultaneously. 
We conjecture that such a hard CL problem can be solved with more comprehensive regularization methods that utilize the Hessian information \citep{kirkpatrick2017overcoming,kolouri2020sliced,li2021lifelong}. 
The limitation of $\ell_2$-RCL has also been observed in practical CL problems \citep{hsu2018re}.

\paragraph{Numerical simulations.}
We design a set of experiments to verify our theory.
We construct a set of CL problems where the data covariance matrices $\HB^{(1)}$ and $\HB^{(2)}$ have different levels of similarity. 
The design is motivated by \citet{wu2022power}.
Specifically, we fix $d=200$. 
The eigenvalues for $\HB^{(1)}$ and $\HB^{(2)}$ are given by  $\{ 1/2^{i-1}\}_{i=1}^d$, and we control the similarity between $\HB^{(1)}$ and $\HB^{(2)}$ by the alignment between their eigenvalues.
Mathematically, for two integers $k,l>0$, 
we specify a CL problem, $\mathtt{Q}(k,l)=(\wB^*, \sigma^2, \HB^{(1)}, \HB^{(2)})$, by:
\begin{equation}\label{eqn:setting-q}
\begin{gathered}
    \HB^{(1)} = \diag \bigg( 1,{1\over 2},{1\over 2^2},\dots, {1\over 2^{k-1}},{1\over 2^k},\dots, {1\over 2^{k+l-1}}, {1\over 2^{k+l}}, \dots \bigg), \\
    \HB^{(2)} = \diag \bigg( {1\over 2^{k-1}}, {1\over 2^{k-2}},\dots, 1, {1\over 2^{k+l-1}}, ,\dots, {1\over 2^k}, {1\over 2^{k+l}}, \dots \bigg), \\
    \wB^* = \bigg( \underbrace{1,\dots,1}_{k+l+2 \textrm{ copies}},\frac{1}{k+l+3},\frac{1}{k+l+4},\dots \bigg)^\top,
    \qquad \sigma^2 = 1.
\end{gathered}
\end{equation}
It is clear that ranging from $\mathtt{Q}(5,0)$ to $\mathtt{Q}(15,0)$ to $\mathtt{Q}(15,15)$, $\HB^{(1)}$ and $\HB^{(2)}$ become more and more dissimilar. 
We then apply JL, OCL, and $\ell_2$-RCL to the three problem instances, and measure their achieved CL excess risks versus the number of samples $n$. 
The results are shown in Figure \ref{fig:change-n}.
We make the following observations from Figure \ref{fig:change-n}:
\begin{itemize}[leftmargin=*]
    \item For the problem $\mathtt{Q}(5,0)$ where the two tasks are very similar, all three methods achieve a vanishing CL excess risk. Moreover, the risk convergence rate of OCL and $\ell_2$-RCL matches that of JL, ignoring constant factors.  
    This suggests that CL with similar tasks is easy and can be solved by even OCL.  
    \item For problem $\mathtt{Q}(15, 0)$ where the two tasks are not similar, OCL fails to compete with JL due to catastrophic forgetting. 
    However, $\ell_2$-RCL can still match the convergence rate of JL by balancing the forgetting and intransigence. 
    \item For problem $\mathtt{Q}(15, 15)$ where the two tasks are very dissimilar, there is a sharp gap in terms of the convergence rate between JL and OCL/$\ell_2$-RCL.
    This demonstrates the limitation of $\ell_2$-RCL for solving very hard CL problems. 
\end{itemize}

\begin{figure}[t]
    \centering
    \subfigure[$\mathtt{Q}(15,0)$]{\includegraphics[width=0.32\linewidth]{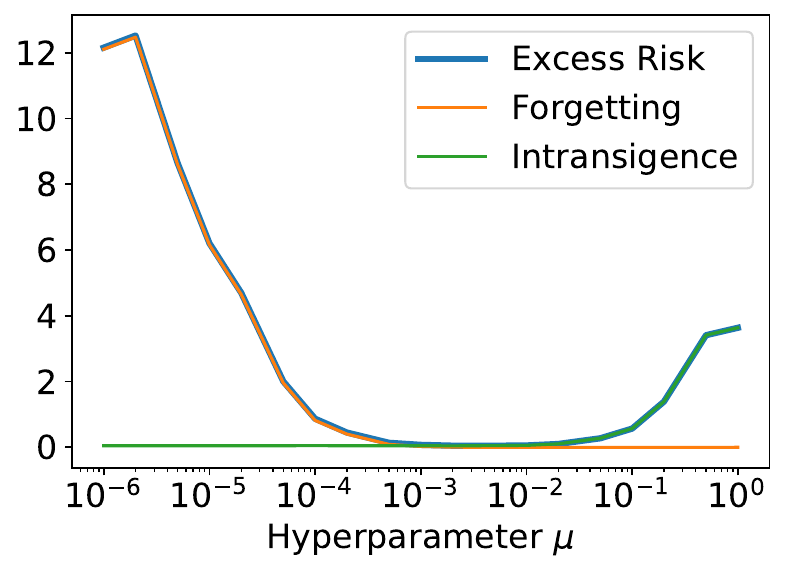}\label{fig:medium-change-mu}}
    \subfigure[$\mathtt{Q}(15,15)$]{\includegraphics[width=0.32\linewidth]{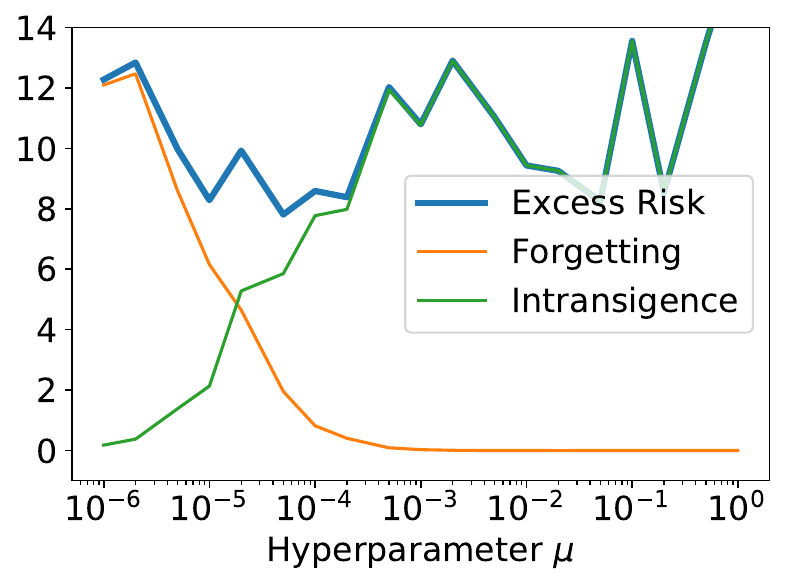}\label{fig:hard-change-mu}}
    \caption{
    The trade-off between forgetting and intransigence for $\ell_2$-RCL.
    For each point in each curve, the x-axis represents the sample size $n$ and the y-axis represents the expected CL excess risk or the expected forgetting or the expected intransigence.
    The problem instances $\mathtt{Q}(15,0)$ and $\mathtt{Q}(15,15)$ are defined by \eqref{eqn:setting-q}.
    The dimension of the task is $d=200$.
    The sample size is $n=2,000$.
    The expectation is computed by taking an empirical average over $20$ independent runs.
    }
    \label{fig:change-mu}
\end{figure}

In order to understand the different behaviors of $\ell_2$-RCL in problem instances $\mathtt{Q}(15,0)$ and $\mathtt{Q}(15,15)$, 
we further compute the (expected) forgetting and intransigence of $\ell_2$-RCL in these two problem instances with a varying regularization parameter ($\mu$). 
The results are shown in Figure \ref{fig:change-mu}.
From 
Figure \ref{fig:change-mu}, we see that for both problem instances, $\ell_2$-RCL is able to reduce forgetting by using a large $\mu$, or reduce intransigence by using a small $\mu$.
However, for $\mathtt{Q}(15,0)$, there is a $\mu$ that simultaneously reduces forgetting and intransigence, and for $\mathtt{Q}(15,15)$, forgetting and intransigence cannot be reduced simultaneously.
This explains why $\ell_2$-RCL can solve $\mathtt{Q}(15,0)$ but cannot solve $\mathtt{Q}(15,15)$.

\section{Related Works}

\paragraph{Continual learning in practice.} 
Our theoretical framework for CL is motivated by the algorithmic research for CL (see, e.g., \citet{parisi2019continual}). 
Among them, the most relevant to us is the \emph{regularization based} CL method \citep{kirkpatrick2017overcoming, zenke2017continual,aljundi2018memory, liu2018rotate, ritter2018online,kolouri2020sliced,li2021lifelong},
where a regularization term is adopted to prevent the model from deviating from the old one when learning a new task, at the cost of introducing intransigence. 
Along this line, the research focus is to find the best regularization for achieving a good balance between forgetting and intransigence.
For example, the \emph{Elastic Weight Consolidation} (EWC) algorithm \citep{kirkpatrick2017overcoming} uses a weighted $\ell_2$-regularization, where the weight corresponds to the Hessian diagonal from the last model, and the \emph{Sliced Cramer Preservation} (SCP) algorithm \citep{kolouri2020sliced} considers a regularization defined by the sliced Cramer distance instead of an $\ell_2$-distance.
Despite the advances they have made in practice, a statistical continual learning theory is still lacking. 
Our work fills this gap by establishing a statistical framework for studying the performance of regularization-based CL algorithms (e.g., $\ell_2$-RCL). 

In addition to the regularization-based CL method, the memory based \citep{rebuffi2017icarl,chaudhry2019tiny,shin2017continual,chaudhry2018efficient,farajtabar2020orthogonal,saha2021gradient} and the architecture based \citep{rusu2016progressive,xu2018reinforced,mallya2018packnet, serra2018overcoming, yoon2020scalable} CL methods are also popular in practice. 
These two categories of CL methods are beyond the focus in this paper, but we believe our problem formulation can be migrated to analyze them as well.

\paragraph{Continual learning in theory.}
Compared to the development of CL in practice, the theoretical results for CL are less rich \citep{bennani2020generalisation,doan2021theoretical,lee2021continual,chen2022memory,evron2022catastrophic,heckel2022provable,yin2020optimization,lin2023theory}.
We now discuss their relationship to our work in turn.

The work by \citet{evron2022catastrophic} is most relevant to us. 
We both consider a domain-incremental CL problem \citep{van2019three} with linear regression tasks. 
However, there are two notable differences. 
Firstly, the label is assumed to be a random variable in our setting (see Assumption \ref{assump:fixed-design}), while the entire dataset is considered to be fixed in their setting.
So they only studied the optimization aspects of CL, while we focus on the statistical aspects of CL. 
Secondly, \citet{evron2022catastrophic} only considered the OCL method \eqref{eqn:sequential-learning}, but we study a more interesting RCL algorithm \eqref{eqn:regularized-learning} that reveals a richer theoretical picture of CL, e.g., the regularization parameter can adjust the trade-off between forgetting intransigence (see Theorem \ref{thm:regularized-learning}).

The works by \citet{heckel2022provable,lin2023theory} considered a task-incremental CL setting \citep{van2019three}, where the true model parameters are different across tasks; however, they additionally assumed that the data covariance matrices are the same (an identity matrix) for all tasks. 
In comparison, we focus on a domain-incremental CL setting \citep{van2019three}, where the true model parameters are shared for all tasks but the data covariance matrices are different across tasks. 
So our results are not directly comparable to those in \citet{heckel2022provable,lin2023theory}. 
In particular, we emphasize that the forgetting-intransigence trade-off shown in our work has not appeared in their works.

The work by \citet{yin2020optimization} considered CL with a Hessian-aware regularization. 
However, their main focus is still the optimization behavior. Their statistical results are obtained by standard uniform convergence analysis, which does not reveal the trade-off between forgetting and intransigence as shown in our work.

The remaining papers are not directly related to ours. 
Specifically, the works by \citet{bennani2020generalisation,doan2021theoretical} analyzed the generalization error and the forgetting of the \emph{Orthogonal Gradient Descent} (OGD) method \citep{farajtabar2020orthogonal}.
\citet{lee2021continual} studied CL in a multi-head setting.
Finally, \citet{chen2022memory} showed a negative, memory lower bound for CL.

\section{Conclusion}
We consider an $\ell_2$-regularized continual learning with two linear regression tasks in the fixed design setting.
We derive sharp risk bounds for the algorithm. 
We show a provable trade-off between induced forgetting and intransigence, which can be adjusted by the regularization parameter. 
We show that forgetting could be catastrophic when the tasks are dissimilar and that an $\ell_2$-regularization (partially) mitigates the issue by introducing intransigence. 
We demonstrate the power and limitation of $\ell_2$-regularized continual learning with concrete examples.

\subsubsection*{Acknowledgments}
This research has been partially supported by the Defense Advanced Research Projects Agency (DARPA) under Contract No HR00112190130, Office of Naval Research under award N00014-23-1-2737, and the National Science Foundation Grant under award 2244899.

\bibliography{ref}
\bibliographystyle{collas2023_conference}

\newpage
\appendix

\section{Missing Proofs}

\subsection{Proof of Theorem \ref{thm:regularized-learning}}
\begin{proof}[Proof of Theorem \ref{thm:regularized-learning}]~
The proof can be done by direct computation.

\paragraph{Computing $\wB^{(1)}$.}
We first compute $\wB^{(1)}$. By definition, we have 
\begin{align*}
    \wB^{(1)} &= \big((\XB^{(1)})^\top\XB^{(1)}  \big)^{-1} (\XB^{(1)})^\top \yB \\
    &= \big((\XB^{(1)})^\top\XB^{(1)}  \big)^{-1} (\XB^{(1)})^\top \XB^{(1)}\wB^* + \big((\XB^{(1)})^\top\XB^{(1)}  \big)^{-1} (\XB^{(1)})^\top\epsilonB^{(1)} \\
    &= \IB_{\Kbb} \cdot \wB^* + \frac{1}{n} { \HB^{(1)} }^{-1} {\XB^{(1)}}^\top \epsilonB^{(1)}.
\end{align*}
Therefore
\begin{align}
     \wB^{(1)} - \wB^* 
     = - \IB_{\Kbb^c} \cdot \wB^* + \frac{1}{n} { \HB^{(1)} }^{-1} {\XB^{(1)}}^\top \epsilonB^{(1)}.\notag 
\end{align}
Now noticing that 
\[\Ebb\epsilonB^{(1)} = 0,\quad 
\Ebb \epsilonB^{(1)}{\epsilonB^{(1)}}^\top = \sigma^2 \IB,
\]
so the covariance of $\wB^{(1)} - \wB^* $ is
\begin{align}
    \Ebb (\wB^{(1)} - \wB^*)(\wB^{(1)} - \wB^* )^\top 
    &= \IB_{\Kbb^c} \cdot \wB^*{\wB^*}^\top \cdot \IB_{\Kbb^c} + \frac{\sigma^2 }{n^2} { \HB^{(1)} }^{-1} {\XB^{(1)}}^\top \XB^{(1)}{ \HB^{(1)} }^{-1} \notag \\
    &= \IB_{\Kbb^c} \cdot \wB^*{\wB^*}^\top \cdot \IB_{\Kbb^c} + \frac{\sigma^2}{n}\cdot { \HB^{(1)} }^{-1}. \label{eq:w1-w*:cov}
\end{align}

\paragraph{Computing $\wB^{(2)}$.}
Then we compute $\wB^{(2)}$, which is the solution of \eqref{eqn:regularized-learning}. By the first-order optimality condition, we have 
\begin{align*}
    \frac{1}{n} {\XB^{(2)}}^\top \big( \XB^{(2)} \wB^{(2)} - \yB^{(2)} \big) + \mu (\wB^{(2)} - \wB^{(1)}) = 0,
\end{align*}
which implies 
\begin{align*}
    \wB^{(2)} 
    &= \bigg( \frac{1}{n}{\XB^{(2)}}^\top \XB^{(2)}  + \mu \IB \bigg)^{-1} \bigg( \frac{1}{n}{\XB^{(2)}}^\top \yB^{(2)} + \mu \wB^{(1)} \bigg)\\
    &= \bigg( \frac{1}{n}{\XB^{(2)}}^\top \XB^{(2)}  + \mu \IB \bigg)^{-1} \bigg( \frac{1}{n}{\XB^{(2)}}^\top\XB^{(2)} \wB^* + \frac{1}{n}{\XB^{(2)}}^\top\epsilonB^{(2)}  + \mu \wB^{(1)} \bigg) \\
    &= \big( \HB^{(2)} + \mu \IB \big)^{-1}\bigg(  \HB^{(2)}\wB^* + \frac{1}{n}{\XB^{(2)}}^\top\epsilonB^{(2)}  + \mu \wB^{(1)}  \bigg).
\end{align*}
Then we have 
\begin{align*}
    \wB^{(2)}  - \wB^*
    &= \big( \HB^{(2)} + \mu \IB \big)^{-1}\bigg(  \HB^{(2)}\wB^* + \frac{1}{n}{\XB^{(2)}}^\top\epsilonB^{(2)}  + \mu \wB^{(1)} -  \big( \HB^{(2)} + \mu \IB \big)\wB^*\bigg) \\
    &= \big( \HB^{(2)} + \mu \IB \big)^{-1}\bigg(  \mu (\wB^{(1)}-\wB^*)  + \frac{1}{n}{\XB^{(2)}}^\top\epsilonB^{(2)} \bigg).
\end{align*}
Similarly, noticing that 
\[\Ebb\epsilonB^{(2)} = 0,\quad 
\Ebb \epsilonB^{(2)}{\epsilonB^{(2)}}^\top = \sigma^2 \IB,
\]
so the covariance of $\wB^{(2)} - \wB^* $ is
\begin{align}
    \Ebb (\wB^{(2)}  - \wB^*)(\wB^{(2)}  - \wB^*)^\top
    &= \mu^2\cdot \big( \HB^{(2)} + \mu \IB \big)^{-1} \cdot \Ebb (\wB^{(1)}  - \wB^*)(\wB^{(1)}  - \wB^*)^\top \cdot \big( \HB^{(2)} + \mu \IB \big)^{-1} \notag \\
    &\qquad + \frac{\sigma^2}{n^2} \big( \HB^{(2)} + \mu \IB \big)^{-1} \cdot {\XB^{(2)}}^\top \XB^{(2)} \cdot \big( \HB^{(2)} + \mu \IB \big)^{-1}  \notag \\
    &= \mu^2\cdot \big( \HB^{(2)} + \mu \IB \big)^{-1} \cdot \Ebb (\wB^{(1)}  - \wB^*)(\wB^{(1)}  - \wB^*)^\top \cdot \big( \HB^{(2)} + \mu \IB \big)^{-1} \notag \\
    &\qquad + \frac{\sigma^2}{n} \big( \HB^{(2)} + \mu \IB \big)^{-2}  \HB^{(2)}.\label{eq:w2-w*:cov}
\end{align}

\paragraph{Risk decomposition.}
According to the risk definition and the assumptions on the noise, we have 
\begin{align}
    \Rcal_1(\wB) &= \frac{1}{n}\Ebb\big\| \XB^{(1)}\wB - \yB^{(1)} \big\|_2^2 \notag \\  
    &= \frac{1}{n}\Ebb\big\| \XB^{(1)}\wB - \XB^{(1)}\wB^* - \epsilonB^{(1)} \big\|_2^2 \notag \\ 
    &= (\wB - \wB^*)^\top \HB^{(1)} (\wB - \wB^*) + \sigma^2 \notag \\
    &= \la\HB^{(1)},\ (\wB - \wB^*)(\wB - \wB^*)^\top \ra + \sigma^2. \label{eq:risk1}
\end{align}
Similarly, the risk for the second task is 
\begin{align}
    \Rcal_2 (\wB) =  \la\HB^{(2)},\ (\wB - \wB^*)(\wB - \wB^*)^\top \ra + \sigma^2. \label{eq:risk2}
\end{align}

Based on \eqref{eq:risk1} and \eqref{eq:risk2}, we can compute the forgetting and intransigence as follows: 
\begin{align}
    \Fcal &:= \Ebb [ \Rcal_1 (\wB^{(2)}) - \Rcal_1 (\wB^{(1)}) ] \notag \\ 
    &= \la \HB^{(1)},\ \Ebb (\wB^{(2)} - \wB^*)(\wB^{(2)} - \wB^*)^\top \ra -  \la \HB^{(1)},\ \Ebb (\wB^{(1)} - \wB^*)(\wB^{(1)} - \wB^*)^\top \ra \label{eq:forgetting},
\end{align}
and 
\begin{align}
    \Ical &:= \Ebb [ \Rcal_2 (\wB^{(2)})] - \min \Rcal_2 + \Ebb [\Rcal_1 (\wB^{(1)}) ] - \min \Rcal_1 \notag  \\ 
    &= \la \HB^{(2)},\ \Ebb (\wB^{(2)} - \wB^*)(\wB^{(2)} - \wB^*)^\top \ra +  \la \HB^{(1)},\ \Ebb (\wB^{(1)} - \wB^*)(\wB^{(1)} - \wB^*)^\top \ra \label{eq:instransigence}.
\end{align}

\paragraph{Computing forgetting.}
We now compute forgetting according to \eqref{eq:forgetting}, \eqref{eq:w1-w*:cov}, and \eqref{eq:w2-w*:cov}.
Firstly, let's use \eqref{eq:forgetting}, \eqref{eq:w2-w*:cov} and Assumption \ref{assump:commutable} to obtain
\begin{align*}
    \Fcal  
    &= \la \HB^{(1)},\ \Ebb (\wB^{(2)} - \wB^*)(\wB^{(2)} - \wB^*)^\top \ra -  \la \HB^{(1)},\ \Ebb (\wB^{(1)} - \wB^*)(\wB^{(1)} - \wB^*)^\top \ra \\
    &= \la \mu^2 \cdot \big(\HB^{(2)}+\mu\IB \big)^{-2}\HB^{(1)},\ \Ebb (\wB^{(1)} - \wB^*)(\wB^{(1)} - \wB^*)^\top \ra + \frac{\sigma^2}{n} \la \HB^{(1)},\ \big(\HB^{(2)}+\mu\IB \big)^{-2}\HB^{(2)} \ra \\
    &\qquad -  \la \HB^{(1)},\ \Ebb (\wB^{(1)} - \wB^*)(\wB^{(1)} - \wB^*)^\top \ra \\
    &= \frac{\sigma^2}{n} \la \HB^{(1)},\ \big(\HB^{(2)}+\mu\IB \big)^{-2}\HB^{(2)} \ra 
    + \la \mu^2 \cdot \big(\HB^{(2)}+\mu\IB \big)^{-2}\HB^{(1)} - \HB^{(1)},\ \Ebb (\wB^{(1)} - \wB^*)(\wB^{(1)} - \wB^*)^\top \ra \\
    &= \frac{\sigma^2}{n} \la \HB^{(1)},\ \big(\HB^{(2)}+\mu\IB \big)^{-2}\HB^{(2)} \ra 
    - \big\la \big( 2\mu \HB^{(2)} + {\HB^{(2)}}^2 \big) \cdot \big(\HB^{(2)}+\mu\IB \big)^{-2}\cdot \HB^{(1)},\ \Ebb (\wB^{(1)} - \wB^*)(\wB^{(1)} - \wB^*)^\top \big\ra .
\end{align*}
Next, we bring in \eqref{eq:w1-w*:cov} to obtain
\begin{align*}
    \Fcal 
    &= \frac{\sigma^2}{n} \la \HB^{(1)},\ \big(\HB^{(2)}+\mu\IB \big)^{-2}\HB^{(2)} \ra 
    - \frac{\sigma^2}{n} \big\la \big( 2\mu \HB^{(2)} + {\HB^{(2)}}^2 \big) \cdot \big(\HB^{(2)}+\mu\IB \big)^{-2}\cdot \HB^{(1)},\ {\HB^{(1)}}^{-1} \big\ra \\
    &\qquad - \big\la \big( 2\mu \HB^{(2)} + {\HB^{(2)}}^2 \big) \cdot \big(\HB^{(2)}+\mu\IB \big)^{-2}\cdot \HB^{(1)},\ \IB_{\Kbb^c}\cdot \wB^* {\wB^*}^{\top} \cdot \IB_{\Kbb^c}\big\ra  \\
    &= \frac{\sigma^2}{n} \la \HB^{(1)},\ \big(\HB^{(2)}+\mu\IB \big)^{-2}\HB^{(2)} \ra 
    - \frac{\sigma^2}{n} \big\la \big( 2\mu \HB^{(2)} + {\HB^{(2)}}^2 \big) \cdot \big(\HB^{(2)}+\mu\IB \big)^{-2}\cdot \HB^{(1)},\ {\HB^{(1)}}^{-1} \big\ra,
\end{align*}
where the last equality is because $\HB^{(1)}\IB_{\Kbb^c} = 0$ by definition.
We can further simplify $\Fcal$:
\begin{align*}
    \Fcal 
    &=  \frac{\sigma^2}{n}\cdot  \la \HB^{(1)},\ \big(\HB^{(2)}+\mu\IB \big)^{-2}\HB^{(2)} \ra 
     - \frac{\sigma^2}{n} \cdot \big\la \big( 2\mu \IB + {\HB^{(2)}} \big) \cdot \IB_{\Kbb},\ \big(\HB^{(2)}+\mu\IB \big)^{-2} \HB^{(2)} \big\ra \\
     &= \frac{\sigma^2}{n}\cdot  \big\la \HB^{(1)} -\big( 2\mu \IB + {\HB^{(2)}} \big) \cdot \IB_{\Kbb} ,\ \big(\HB^{(2)}+\mu\IB \big)^{-2}\HB^{(2)} \big\ra \\
     &= \frac{\sigma^2}{n}\cdot  \big\la \big( \HB^{(1)} - 2\mu \IB - {\HB^{(2)}} \big) \cdot \IB_{\Kbb} ,\ \big(\HB^{(2)}+\mu\IB \big)^{-2}\HB^{(2)} \big\ra.
\end{align*}
We have completed the proof for the forgetting.

\paragraph{Computing intransigence.}
We next compute intransigence according to \eqref{eq:w1-w*:cov}, \eqref{eq:w2-w*:cov} and \eqref{eq:instransigence} .
We first use \eqref{eq:w2-w*:cov}, \eqref{eq:instransigence}, and Assumption \ref{assump:commutable} to obtain:
\begin{align*}
    \Ical 
    &= \la \HB^{(2)},\ \Ebb (\wB^{(2)} - \wB^*)(\wB^{(2)} - \wB^*)^\top \ra +  \la \HB^{(1)},\ \Ebb (\wB^{(1)} - \wB^*)(\wB^{(1)} - \wB^*)^\top \ra \\
    &= \la \mu^2 \cdot \big(\HB^{(2)}+\mu\IB \big)^{-2}\HB^{(2)},\ \Ebb (\wB^{(1)} - \wB^*)(\wB^{(1)} - \wB^*)^\top \ra + \frac{\sigma^2}{n} \la \HB^{(2)},\ \big(\HB^{(2)}+\mu\IB \big)^{-2}\HB^{(2)} \ra \\
    &\qquad +  \la \HB^{(1)},\ \Ebb (\wB^{(1)} - \wB^*)(\wB^{(1)} - \wB^*)^\top \ra \\
    &= \frac{\sigma^2}{n} \cdot \tr \big\{ {\HB^{(2)}}^2\big(\HB^{(2)}+\mu\IB \big)^{-2} \big\} 
    + \la \mu^2 \cdot \big(\HB^{(2)}+\mu\IB \big)^{-2}\HB^{(2)} + \HB^{(1)},\ \Ebb (\wB^{(1)} - \wB^*)(\wB^{(1)} - \wB^*)^\top \ra.
\end{align*}
Next, let's insert \eqref{eq:w1-w*:cov} into the above, then we obtain
\begin{align*}
    \Ical 
    &= \frac{\sigma^2}{n} \cdot \tr \big\{ {\HB^{(2)}}^2\big(\HB^{(2)}+\mu\IB \big)^{-2} \big\} 
    + \frac{\sigma^2}{n} \cdot \la \mu^2 \cdot \big(\HB^{(2)}+\mu\IB \big)^{-2}\HB^{(2)} + \HB^{(1)},\ {\HB^{(1)}}^{-1} \ra  \\
    &\qquad + \la \mu^2 \cdot \big(\HB^{(2)}+\mu\IB \big)^{-2}\HB^{(2)} + \HB^{(1)},\ \IB_{\Kbb^c}\wB^*{\wB^*}^\top \IB_{\Kbb^c}\ra \\
    &= \frac{\sigma^2}{n} \cdot \tr \big\{ {\HB^{(2)}}^2\big(\HB^{(2)}+\mu\IB \big)^{-2} \big\} 
    + \frac{\sigma^2}{n} \cdot \tr\big\{  \mu^2 \cdot \big(\HB^{(2)}+\mu\IB \big)^{-2}\HB^{(2)}{\HB^{(1)}}^{-1} + \IB_{\Kbb} \big\} \\
    &\qquad + \la \mu^2 \cdot \big(\HB^{(2)}+\mu\IB \big)^{-2}\HB^{(2)}\IB_{\Kbb^c} ,\ \wB^*{\wB^*}^\top \ra \\
    &= \frac{\sigma^2}{n} \cdot\bigg(  \tr \big\{ {\HB^{(2)}}^2\big(\HB^{(2)}+\mu\IB \big)^{-2} \big\} + \tr\big\{  \mu^2 \cdot \big(\HB^{(2)}+\mu\IB \big)^{-2}\HB^{(2)}{\HB^{(1)}}^{-1} \big\}  + \rank\big(\HB^{(1)} \big) \bigg) \\
    &\qquad  + {\wB^*}^\top \cdot \big( \IB_{\Kbb^c} \cdot \mu^2\HB^{(2)} \big(\HB^{(2)}+\mu\IB \big)^{-2} \big) \wB^*.
\end{align*}
We have completed the proof for the intransigence.

\end{proof}

\subsection{Proof of Proposition \ref{prop:joint-learning}}

\begin{proof}[Proof of Proposition \ref{prop:joint-learning}]
Define the joint data by 
\[
\XB := 
\begin{pmatrix}
\XB^{(1)} \\
\XB^{(2)}
\end{pmatrix} \in \Rbb^{2n \times d},\quad 
\yB := 
\begin{pmatrix}
\yB^{(1)} \\
\yB^{(2)}
\end{pmatrix} \in \Rbb^{2n}.
\]
Define 
\[\HB := \HB^{(1)}+\HB^{(2)}.\]
Then  it holds that
\[\XB^\top \XB = \HB^{(1)}+\HB^{(2)} = \HB.\]
Similarly, define the joint label noise by 
\[
\epsilonB := \begin{pmatrix}
\epsilonB^{(1)} \\
\epsilonB^{(2)}
\end{pmatrix} \in \Rbb^{2n},
\]
then 
\[\Ebb \epsilonB = 0,\quad \Ebb \epsilonB\epsilonB^\top = \sigma^2 \IB.\]
By the definition of \eqref{eqn:joint-learning}, we obtain 
\begin{align*}
    \wB_\mathtt{joint} 
    &= \big( {\XB}^\top \XB \big)^{-1} {\XB}^\top \yB \\
    &= \big( {\XB}^\top \XB \big)^{-1} {\XB}^\top \XB \wB^* + \big( {\XB}^\top \XB \big)^{-1} {\XB}^\top \epsilonB,
\end{align*}
which implies that 
\begin{align*}
    \wB_\mathtt{joint}  - \wB^*
    &= \Big( \big( {\XB}^\top \XB \big)^{-1} {\XB}^\top \XB -\IB \Big) \cdot \wB^* + \big( {\XB}^\top \XB \big)^{-1} {\XB}^\top \epsilonB.
\end{align*}
Therefore, it holds that
\begin{align}
    \Ebb (\wB_\mathtt{joint}  - \wB^*)(\wB_\mathtt{joint}  - \wB^*)^\top 
    &= \Big( \big( {\XB}^\top \XB \big)^{-1} {\XB}^\top \XB -\IB \Big) \cdot \wB^*{\wB^*}^\top  \Big( \big( {\XB}^\top \XB \big)^{-1} {\XB}^\top \XB-\IB \Big) + \sigma^2\cdot \big( {\XB}^\top \XB \big)^{-1} \notag \\
    &= ( \HB^{-1} \HB -\IB ) \cdot \wB^*{\wB^*}^\top\cdot   ( \HB^{-1} \HB -\IB )+ \frac{\sigma^2}{n}\cdot \HB^{-1}.\label{eq:wj-w*:cov}
\end{align}
Moreover, the total risk can be reformulated to
\begin{align*}
    \Rcal (\wB) - \min \Rcal 
    &=  \Rcal_1 (\wB) - \min \Rcal_1 +  \Rcal_2 (\wB) - \min \Rcal_2 \\
    &= \la\HB^{(1)},\ (\wB - \wB^*)(\wB - \wB^*)^\top \ra + \la\HB^{(2)},\ (\wB - \wB^*)(\wB - \wB^*)^\top \ra \\
    &= \la\HB,\ (\wB - \wB^*)(\wB - \wB^*)^\top \ra.
\end{align*}
Bringing \eqref{eq:wj-w*:cov} into the above, we obtain 
\begin{align*}
    \Ebb \Rcal (\wB_\joint) - \min \Rcal 
    &= \la\HB,\ \Ebb (\wB_\joint - \wB^*)(\wB_\joint - \wB^*)^\top \ra \\
    &= \la \HB,\ .( \HB^{-1} \HB -\IB ) \cdot \wB^*{\wB^*}^\top\cdot   ( \HB^{-1} \HB -\IB )\ra + \frac{\sigma^2}{n}\cdot \la \HB, \HB^{-1} \ra \\
    &= \frac{\sigma^2}{n}\cdot \rank(\HB) \\
    &= \frac{\sigma^2}{n}\cdot \rank(\HB^{(1)} +\HB^{(2)}  ).
\end{align*}
We have finished the proof.
\end{proof}

\subsection{Proof of Examples}

\begin{proof}[Proof of Example \ref{cor:sequential-example}]
    We examine $\expect{\Rcal(\wB^{(2)}) - \Rcal(\wB^*)}$ by separately examine the order of $\Fcal$ and $\Ical$.
    \begin{enumerate}
        \item By Theorem \ref{thm:regularized-learning} we have
        \begin{align*}
            \Fcal &= \frac{\sigma^2}{n} \big( \tr\big\{ \HB^{(1)}{\HB^{(2)}}^{-1} \big\} - \rank \big\{\HB^{(1)}\HB^{(2)} \big\} \big) \\
            &= \frac{\sigma^2}{n} (\bigO{n^{1-r}} - \bigO{1}) = \bigO{n^{-r}}, \\
            \Ical &= \frac{\sigma^2}{n} (\mathrm{rank} \big\{ \HB^{(2)} + \mathrm{rank} \big\{ \HB^{(2)} = \bigO{n^\inv} \})
        \end{align*}
        As a result, $\expect{\Rcal(\wB^{(2)}) - \Rcal(\wB^*)} = \bigO{n^{-r}}$.
        \item By Theorem \ref{thm:regularized-learning}, we have
        \begin{align*}
            \Fcal &= \frac{\sigma^2}{n} \big( \tr\big\{ \HB^{(1)}{\HB^{(2)}}^{-1} \big\} - \rank \big\{\HB^{(1)}\HB^{(2)} \big\} \big) \\
            &= \frac{\sigma^2}{n} (\bigOmega{n} - \bigO{1}) = \bigOmega{1}, \\
            \Ical &= \frac{\sigma^2}{n} (\mathrm{rank} \big\{ \HB^{(2)} + \mathrm{rank} \big\{ \HB^{(2)} = \bigO{n^\inv} \})
        \end{align*}
        As a result, $\expect{\Rcal(\wB^{(2)}) - \Rcal(\wB^*)} = \bigOmega{1}$.
    \end{enumerate}
\end{proof}

\begin{proof}[Proof of Example \ref{cor:regularized-discussion}]
    We examine $\expect{\Rcal(\wB^{(2)}) - \Rcal(\wB^*)}$ by separately examine the order of $\Fcal$ and $\Ical$.
    \begin{enumerate}
        \item \textbf{OCL:} This is a special case of the second example in Corollary \ref{cor:sequential-example}.
        \item \textbf{$\ell_2$-RCL:}
        By Theorem \ref{thm:regularized-learning} we have
        \begin{align*}
            \Fcal = \frac{\sigma^2}{n} \cdot \dimF, 
            \qquad \Ical = \bias + \frac{\sigma^2}{n} \cdot \dimI,
        \end{align*}
        where
        \begin{equation*}
        \begin{aligned}
            \dimF
            &\le \tr \Big\{ \HB^{(1)} \cdot \HB^{(2)}\cdot \big( \HB^{(2)} + \mu \IB \big)^{-2}\Big\} \\
            &= \bigO{n^{1/3}} + \bigO{1} = \bigO{n^{1/3}}, \\
            \dimI 
            &\le \tr\Big\{ \mu^2 \cdot {\HB^{(1)}}^{-1}\cdot \HB^{(2)}\cdot \big( \HB^{(2)} + \mu\IB \big)^{-2} \Big\} + \rank\big\{ \HB^{(2)} \big\} + \rank\big\{ \HB^{(1)}\big\} \\
            &= \bigO{1}+ \bigO{1}+ \bigO{1} = \bigO{1},\\
            \bias &\le \mu \cdot \| \wB^*\|_2^2 = \bigO{n^{-2/3}}.
        \end{aligned}
        \end{equation*}
        As a result, $\Fcal = \bigO{n^{-2/3}}$, $\Ical = \bigO{n^{-2/3}}$, and $\expect{\Rcal(\wB^{(2)}) - \Rcal(\wB^*)} = \bigO{n^{-2/3}}$.
    \end{enumerate}
\end{proof}

\begin{proof}[Proof of Example \ref{cor:regularized-limitation}]
    Similar to the above, we separately examine the order of $\Fcal$ and $\Ical$.
    By Theorem \ref{thm:regularized-learning}, the forgetting satisfies
    \begin{align*}
        \Fcal= {2\sigma^2 \over n} {\mu^2 \over (1/n + \mu)^2} + {\sigma^2 \over n} (n + n^\inv) {(1/n)^2 \over (1/n + \mu)^2} - {\sigma^2 \over n} \mathrm{rank}(\HB^{(1)})
    \end{align*}
    so $\Fcal=\bigOmega{1}$ for $\mu = \bigO{n^\inv}$, and $\Fcal=\bigO{n^\inv}$ for $\mu = \bigTheta{n^{-0.5}}$;
    and the intransigence satisfies
    \begin{align*}
        \Ical &\leq \mu \|\wB^*\|_2^2 + {\sigma^2 \over n} (n + n^\inv) {\mu^2 \over (1/n + \mu)^2} + {2\sigma^2 \over n}  {(1/n)^2 \over (1/n + \mu)^2} + {\sigma^2 \over n} \mathrm{rank}(\HB^{(1)}) \\
        \Ical &\geq {\sigma^2 \over n} (n + n^\inv) {\mu^2 \over (1/n + \mu)^2} + {2\sigma^2 \over n}  {(1/n)^2 \over (1/n + \mu)^2} + {\sigma^2 \over n} \mathrm{rank}(\HB^{(1)})
    \end{align*}
    so $\Ical=\bigOmega{1}$ for $\mu = \bigOmega{n^\inv}$, and $\Ical=\bigO{n^\inv}$ for $\mu = \bigTheta{n^{-1.5}}$.

    We note that for every $\mu>0$,
    \begin{align*}
        \expect{\Rcal(\wB^{(2)}) - \Rcal(\wB^*)} &= \Fcal + \Ical \\
        &\geq {\sigma^2 \over n} (n + n^\inv) {\mu^2 \over (1/n + \mu)^2} + {\sigma^2 \over n} (n + n^\inv) {(1/n)^2 \over (1/n + \mu)^2} \\
        &\geq \sigma^2 \left( {\mu^2 \over (1/n + \mu)^2} +  {(1/n)^2 \over (1/n + \mu)^2} \right) = \bigOmega{1} .
    \end{align*}
\end{proof}

\section{Extension of Section \ref{sec:risk-CL}}
The main paper discusses the continual learning implications when there exist certain assumptions, including Assumptions \ref{assump:shared-optimal} and \ref{assump:commutable}.
In this section we would like to discuss the case without these assumptions.

\subsection{Risk Bounds For Continual Learning without Assumption \ref{assump:shared-optimal}}

\begin{theorem}[A risk bound for $\ell_2$-RCL/OCL without Assumption \ref{assump:shared-optimal}]\label{thm:risk-non-shared-optimal}
    Suppose that Assumptions \ref{assump:fixed-design} , \ref{assump:noise} and \ref{assump:commutable} hold.
    Denote ${\wB^{(1)}}^*, {\wB^{(2)}}^*\in\Rbb^d$ such that 
    \[{\wB^{(1)}}^* \in \arg\min \Rcal_1(\wB),\quad {\wB^{(2)}}^* \in \arg\min \Rcal_2(\wB).\]
    Then for the $\ell_2$-RCL output \eqref{eqn:regularized-learning}, it holds that 
    \begin{equation*}
        \Ebb\big[ \Rcal\big( \wB^{(2)} \big) \big]  - \min \Rcal (\cdot) = \Ebb \big[ \Fcal\big(\wB^{(2)}, \wB^{(1)}\big) \big] + \Ebb \big[ \Ical\big(\wB^{(2)}, \wB^{(1)}\big) \big] .
    \end{equation*}
    Moreover, it holds that
    \begin{align*}
        \Ebb \big[ \Fcal\big(\wB^{(2)}, \wB^{(1)}\big) \big]
        &= \biasF + \frac{\sigma^2}{n} \cdot \dimF, \\
        \Ebb \big[ \Ical\big(\wB^{(2)}, \wB^{(1)}\big) \big] 
        &= \biasI + \frac{\sigma^2}{n} \cdot \dimI,
    \end{align*}
    where
    \begin{equation}\label{eqn:bias-non-shared-optimal}
    \begin{aligned}
        \biasF & := ({\wB^{(1)}}^* - {\wB^{(2)}}^*)^\top \cdot \HB^{(1)} \cdot {\HB^{(2)}}^2 \cdot \big(  {\HB^{(2)} +  \mu\IB}\big)^{-2} \cdot ({\wB^{(1)}}^* - {\wB^{(2)}}^*), \\
        \biasI & := (\IB_\Kbb {\wB^{(1)}}^* - {\wB^{(2)}}^*)^\top \cdot \mu^2 \cdot \HB^{(2)} \cdot \big(  {\HB^{(2)} +  \mu\IB}\big)^{-2} \cdot (\IB_\Kbb {\wB^{(1)}}^* - {\wB^{(2)}}^*),
    \end{aligned}
    \end{equation}
    for index sets $\Kbb := \big\{ i: \lambda_i^{(1)} >  0 \big\}$, and
    \begin{equation}\label{eqn:eff-dim-non-shared-optimal}
    \begin{aligned}
        \dimF & :=   \tr\Big\{\IB_{\Kbb}\cdot \big( \HB^{(1)}-\HB^{(2)}-2\mu\IB\big) \cdot \HB^{(2)}\cdot \big( \HB^{(2)} + \mu \IB \big)^{-2} \Big\}, \\
        \dimI & := \tr\Big\{ \big( \mu^2 \cdot {\HB^{(1)}}^{-1} + \HB^{(2)}\big)\cdot \HB^{(2)}\cdot \big( \HB^{(2)} + \mu\IB \big)^{-2} \Big\} + \rank\big\{ \HB^{(1)} \big\}, 
    \end{aligned}
    \end{equation}
    are the same as in Theorem \ref{thm:regularized-learning}.
\end{theorem}

This shows that Assumption \ref{assump:shared-optimal} only has an effect on the bias term of the bound.
The implication of this result is twofold.

Firstly, in terms of the OCL algorithm, to achieve an $o(1)$ CL excess risk, it suffices that 
\begin{equation}\label{eq:ocl:task-similarity:non-shared-optimal}
    \tr\big\{ \HB^{(1)} {\HB^{(2)}}^{-1} \big\} \leq o(n) ,\quad ({\wB^{(1)}}^* - {\wB^{(2)}}^*)^\top \cdot \HB^{(1)} \cdot {\HB^{(2)}}^2 \cdot \big( \HB^{(2)}\big)^{-2} \cdot ({\wB^{(1)}}^* - {\wB^{(2)}}^*) = o(1).
\end{equation}
This can count as another similarity measurement of OCL in the non-shared-optimal case, and is tougher than the condition in \eqref{eq:ocl:task-similarity}.
A necessary condition for this to happen is that ${\wB^{(1)}}^* - {\wB^{(2)}}^*$ has $o(1)$ entries for all indices $i$ that satisfy $ \lambda_i^{(1)} \geq c, \lambda_i^{(2)} > 0$ for a constant $c$, which is not the usual case.

Secondly we explain the effect on the $\ell_2$-RCL algorithm.
According to \eqref{eqn:bias-non-shared-optimal} and \eqref{eqn:eff-dim-non-shared-optimal}, the regularizer coefficient $\mu$ controls a trade-off between forgetting and intransigence, in similar terms to the non-shared-optimal case that increasing $\mu$ reduces forgetting ($\biasF$ and $\dimF$) while produces intransigence ($\biasI$ and $\dimI$), and vice versa.
Moreover, comparing to the OCL algorithm and the scenario with shared optimal, for the $\ell_2$-RCL to achieve an $o(1)$ CL excess risk without Assumption \ref{assump:shared-optimal}, it suffices that $\biasF, \biasI = o(1)$ and $\dimF, \dimI = o(n)$.
Consider the upper bound of $\dimF, \dimI$ in \eqref{eqn:eff-dim-upper-bound}, it suffices to require that 
\begin{equation}\label{eq:rcl:task-similarity:non-shared-optimal}
\begin{gathered}
    \tr \Big\{ \HB^{(1)} \cdot  \big( \HB^{(2)} + \mu \IB \big)^{-1}\Big\} \le o(n), \quad 
    \mu \cdot \tr\Big\{ {\HB^{(1)}}^{-1} \Big\} \le o(n), \quad\rank(\HB^{(1)}) + \rank(\HB^{(2)}) \leq o(n), \\
    ({\wB^{(1)}}^* - {\wB^{(2)}}^*)^\top \cdot \HB^{(1)} \cdot {\HB^{(2)}}^2 \cdot \big(  {\HB^{(2)} +  \mu\IB}\big)^{-2} \cdot ({\wB^{(1)}}^* - {\wB^{(2)}}^*) \leq o(1), \\
    (\IB_\Kbb {\wB^{(1)}}^* - {\wB^{(2)}}^*)^\top \cdot \mu^2 \cdot \HB^{(2)} \cdot \big(  {\HB^{(2)} +  \mu\IB}\big)^{-2} \cdot (\IB_\Kbb {\wB^{(1)}}^* - {\wB^{(2)}}^*)^\top \leq o(1),
\end{gathered}
\end{equation}
for some $\mu > 0$. 
The OCL condition in \eqref{eq:ocl:task-similarity:non-shared-optimal} is a special case of this condition when ${\wB^{(1)}}^* = {\wB^{(2)}}^*$.

\subsection{Risk Bounds For Continual Learning without Assumption \ref{assump:commutable}}

\begin{theorem}[A risk bound for $\ell_2$-RCL/OCL without Assumption \ref{assump:commutable}]\label{thm:risk-non-commutable}
    Suppose that Assumptions \ref{assump:fixed-design} to \ref{assump:noise} hold.
    Then for the $\ell_2$-RCL output \eqref{eqn:regularized-learning}, it holds that 
    \begin{equation*}
        \Ebb\big[ \Rcal\big( \wB^{(2)} \big) \big]  - \min \Rcal (\cdot) = \Ebb \big[ \Fcal\big(\wB^{(2)}, \wB^{(1)}\big) \big] + \Ebb \big[ \Ical\big(\wB^{(2)}, \wB^{(1)}\big) \big] .
    \end{equation*}
    Moreover, it holds that
    \begin{align*}
        \Ebb \big[ \Fcal\big(\wB^{(2)}, \wB^{(1)}\big) \big]
        &= \biasF + \frac{\sigma^2}{n} \cdot \dimF, \\
        \Ebb \big[ \Ical\big(\wB^{(2)}, \wB^{(1)}\big) \big] 
        &= \biasI + \frac{\sigma^2}{n} \cdot \dimI,
    \end{align*}
    where
    \begin{equation}
    \begin{aligned}
        \dimF & := \tr\Big\{ \big( \mu^2 \cdot {\HB^{(1)}}^{-1} + \HB^{(2)}\big)\cdot \big( \HB^{(2)} + \mu\IB \big)^{-1} \cdot \HB^{(1)}\cdot \big( \HB^{(2)} + \mu\IB \big)^{-1} \Big\} - \rank\big\{ \HB^{(1)} \big\}, \\
        \dimI & := \tr\Big\{ \big( \mu^2 \cdot {\HB^{(1)}}^{-1} + \HB^{(2)}\big)\cdot \big( \HB^{(2)} + \mu\IB \big)^{-1} \cdot \HB^{(2)}\cdot \big( \HB^{(2)} + \mu\IB \big)^{-1} \Big\} + \rank\big\{ \HB^{(1)} \big\}, \\
        \biasF & := (\wB^*)^\top \cdot \IB_{\Kbb^c} \cdot \mu^2 \cdot \big(  {\HB^{(2)} +  \mu\IB}\big)^{-1} \cdot \HB^{(1)} \cdot \big(  {\HB^{(2)} +  \mu\IB}\big)^{-1} \cdot \wB^*, \\
        \biasI & := (\wB^*)^\top \cdot \IB_{\Kbb^c} \cdot \mu^2 \cdot \big(  {\HB^{(2)} +  \mu\IB}\big)^{-1} \cdot \HB^{(2)} \cdot \big(  {\HB^{(2)} +  \mu\IB}\big)^{-1} \cdot \wB^*, 
    \end{aligned}
    \end{equation}
    for index sets $\Kbb := \big\{ i: \lambda_i^{(1)} >  0 \big\}$.
\end{theorem}

This shows that by dropping Assumption \ref{assump:commutable}, to achieve an $o(1)$ CL excess risk, it suffices that
\begin{equation}\label{eq:rcl:task-similarity:non-commutable}
\begin{gathered}
    \tr\Big\{ \big( \mu^2 \cdot {\HB^{(1)}}^{-1} + \HB^{(2)}\big)\cdot \big( \HB^{(2)} + \mu\IB \big)^{-1} \cdot \big( \HB^{(1)} + \HB^{(2)} \big)  \cdot \big( \HB^{(2)} + \mu\IB \big)^{-1} \Big\} \leq o(n), \\
    (\wB^*)^\top \cdot \IB_{\Kbb^c} \cdot \mu^2 \cdot \big(  {\HB^{(2)} +  \mu\IB}\big)^{-1} \cdot \HB^{(1)} \cdot \big(  {\HB^{(2)} +  \mu\IB}\big)^{-1} \cdot \wB^* \leq o(1), \quad \mu \leq o(1).
\end{gathered}
\end{equation}

We note that \eqref{eq:rcl:task-similarity:non-commutable} and Theorem \ref{thm:risk-non-commutable} are extensions of the main result in \eqref{eqn:eff-dim-upper-bound} and Theorem \ref{thm:regularized-learning}.

\subsection{Proof of Theorem \ref{thm:risk-non-shared-optimal}}

\begin{proof}[Proof of Theorem \ref{thm:risk-non-shared-optimal}]
    The proof can be done by direct computation and is similar to the proof of Theorem \ref{thm:regularized-learning}.

\paragraph{Risk decomposition.}
According to the risk definition and the assumptions on the noise, we have 
\begin{align}
    \Rcal_1(\wB) &= \frac{1}{n}\Ebb\big\| \XB^{(1)}\wB - \yB^{(1)} \big\|_2^2 \notag \\  
    &= \frac{1}{n}\Ebb\big\| \XB^{(1)}\wB - \XB^{(1)}\wB^* - \epsilonB^{(1)} \big\|_2^2 \notag \\ 
    &= (\wB - \wB^*)^\top \HB^{(1)} (\wB - {\wB^{(1)}}^*) + \sigma^2 \notag \\
    &= \la\HB^{(1)},\ (\wB - {\wB^{(1)}}^*)(\wB - {\wB^{(1)}}^*)^\top \ra + \sigma^2. \label{eq:risk1:non-shared-optimal}
\end{align}
Similarly, the risk for the second task is 
\begin{align}
    \Rcal_2 (\wB) =  \la\HB^{(2)},\ (\wB - {\wB^{(2)}}^*)(\wB - {\wB^{(1)}}^*)^\top \ra + \sigma^2. \label{eq:risk2:non-shared-optimal}
\end{align}

Based on \eqref{eq:risk1:non-shared-optimal} and \eqref{eq:risk2:non-shared-optimal}, we can compute the forgetting and intransigence as follows: 
\begin{align}
    \Fcal &:= \Ebb [ \Rcal_1 (\wB^{(2)}) - \Rcal_1 (\wB^{(1)}) ] \notag \\ 
    &= \la \HB^{(1)},\ \Ebb (\wB^{(2)} - {\wB^{(1)}}^*)(\wB^{(2)} - {\wB^{(1)}}^*)^\top \ra -  \la \HB^{(1)},\ \Ebb (\wB^{(1)} - {\wB^{(1)}}^*)(\wB^{(1)} - {\wB^{(1)}}^*)^\top \ra \label{eq:forgetting:non-shared-optimal},
\end{align}
and 
\begin{align}
    \Ical &:= \Ebb [ \Rcal_2 (\wB^{(2)})] - \min \Rcal_2 + \Ebb [\Rcal_1 (\wB^{(1)}) ] - \min \Rcal_1 \notag  \\ 
    &= \la \HB^{(2)},\ \Ebb (\wB^{(2)} - {\wB^{(2)}}^*)(\wB^{(2)} - {\wB^{(2)}}^*)^\top \ra +  \la \HB^{(1)},\ \Ebb (\wB^{(1)} - {\wB^{(1)}}^*)(\wB^{(1)} - {\wB^{(1)}}^*)^\top \ra \label{eq:instransigence:non-shared-optimal}.
\end{align}

\paragraph{Computing $\wB^{(1)}$.}
We first compute $\wB^{(1)}$. By definition, we have 
\begin{align*}
    \wB^{(1)} &= \big((\XB^{(1)})^\top\XB^{(1)}  \big)^{-1} (\XB^{(1)})^\top \yB \\
    &= \big((\XB^{(1)})^\top\XB^{(1)}  \big)^{-1} (\XB^{(1)})^\top \XB^{(1)}{\wB^{(1)}}^* + \big((\XB^{(1)})^\top\XB^{(1)}  \big)^{-1} (\XB^{(1)})^\top\epsilonB^{(1)} \\
    &= \IB_{\Kbb} \cdot \wB^* + \frac{1}{n} { \HB^{(1)} }^{-1} {\XB^{(1)}}^\top \epsilonB^{(1)}.
\end{align*}
Therefore
\begin{align}
     \wB^{(1)} - {\wB^{(1)}}^* 
     = - \IB_{\Kbb^c} \cdot {\wB^{(1)}}^* + \frac{1}{n} { \HB^{(1)} }^{-1} {\XB^{(1)}}^\top \epsilonB^{(1)}.\notag 
\end{align}
Now noticing that 
\[\Ebb\epsilonB^{(1)} = 0,\quad 
\Ebb \epsilonB^{(1)}{\epsilonB^{(1)}}^\top = \sigma^2 \IB,
\]
so the covariance of $\wB^{(1)} - {\wB^{(1)}}^* $ is
\begin{align}
    \Ebb (\wB^{(1)} - {\wB^{(1)}}^*)(\wB^{(1)} - {\wB^{(1)}}^* )^\top 
    &= \IB_{\Kbb^c} \cdot {\wB^{(1)}}^*{{\wB^{(1)}}^*}^\top \cdot \IB_{\Kbb^c} + \frac{\sigma^2 }{n^2} { \HB^{(1)} }^{-1} {\XB^{(1)}}^\top \XB^{(1)}{ \HB^{(1)} }^{-1} \notag \\
    &= \IB_{\Kbb^c} \cdot {\wB^{(1)}}^*{{\wB^{(1)}}^*}^\top \cdot \IB_{\Kbb^c} + \frac{\sigma^2}{n}\cdot { \HB^{(1)} }^{-1}. 
\end{align}
and that
\begin{align} \label{eq:r1w1:non-shared-optimal}
    \la \HB^{(1)},\ \Ebb (\wB^{(1)} - {\wB^{(1)}}^*)(\wB^{(1)} - {\wB^{(1)}}^*)^\top \ra = \frac{\sigma^2}{n} \rank(\HB^{(1)}) 
\end{align}
since by definition  $\HB^{(1)}\IB_{\Kbb^c} = 0$.

\paragraph{Computing $\wB^{(2)}$.}
Then we compute $\wB^{(2)}$, which is the solution of \eqref{eqn:regularized-learning}. By the first-order optimality condition, we have 
\begin{align*}
    \frac{1}{n} {\XB^{(2)}}^\top \big( \XB^{(2)} \wB^{(2)} - \yB^{(2)} \big) + \mu (\wB^{(2)} - \wB^{(1)}) = 0,
\end{align*}
which implies 
\begin{align*}
    \wB^{(2)} 
    &= \bigg( \frac{1}{n}{\XB^{(2)}}^\top \XB^{(2)}  + \mu \IB \bigg)^{-1} \bigg( \frac{1}{n}{\XB^{(2)}}^\top \yB^{(2)} + \mu \wB^{(1)} \bigg)\\
    &= \bigg( \frac{1}{n}{\XB^{(2)}}^\top \XB^{(2)}  + \mu \IB \bigg)^{-1} \bigg( \frac{1}{n}{\XB^{(2)}}^\top\XB^{(2)} {\wB^{(2)}}^* + \frac{1}{n}{\XB^{(2)}}^\top\epsilonB^{(2)}  + \mu \wB^{(1)} \bigg) \\
    &= \big( \HB^{(2)} + \mu \IB \big)^{-1}\bigg(  \HB^{(2)}{\wB^{(2)}}^* + \frac{1}{n}{\XB^{(2)}}^\top\epsilonB^{(2)}  + \mu \wB^{(1)}  \bigg).
\end{align*}
Then we have 
\begin{align*}
    \wB^{(2)}  - {\wB^{(2)}}^*
    &= \big( \HB^{(2)} + \mu \IB \big)^{-1}\bigg(  \HB^{(2)}{\wB^{(2)}}^* + \frac{1}{n}{\XB^{(2)}}^\top\epsilonB^{(2)}  + \mu \wB^{(1)} -  \big( \HB^{(2)} + \mu \IB \big){\wB^{(2)}}^*\bigg) \\
    &= \big( \HB^{(2)} + \mu \IB \big)^{-1}\bigg(  \mu (\wB^{(1)}-{\wB^{(2)}}^*)  + \frac{1}{n}{\XB^{(2)}}^\top\epsilonB^{(2)} \bigg) \\
    &= \big( \HB^{(2)} + \mu \IB \big)^{-1}\bigg(  \mu (\IB_{\Kbb} \cdot {\wB^{(1)}}^* -{\wB^{(2)}}^*) + \frac{1}{n} \mu{ \HB^{(1)} }^{-1} {\XB^{(1)}}^\top \epsilonB^{(1)}  + \frac{1}{n}{\XB^{(2)}}^\top\epsilonB^{(2)} \bigg) ,
\end{align*}
and
\begin{align*}
    \wB^{(2)}  - {\wB^{(1)}}^*
    &= \big( \HB^{(2)} + \mu \IB \big)^{-1}\bigg(  \mu (\wB^{(1)}-{\wB^{(2)}}^*) - \big( \HB^{(2)} + \mu \IB \big) ({\wB^{(1)}}^*-{\wB^{(2)}}^*) + \frac{1}{n}{\XB^{(2)}}^\top\epsilonB^{(2)} \bigg) \\
    &= \big( \HB^{(2)} + \mu \IB \big)^{-1}\bigg(  \mu (\wB^{(1)}-{\wB^{(1)}}^*) - \HB^{(2)}({\wB^{(1)}}^*-{\wB^{(2)}}^*) + \frac{1}{n}{\XB^{(2)}}^\top\epsilonB^{(2)} \bigg) \\
    &= \big( \HB^{(2)} + \mu \IB \big)^{-1}\bigg(  -\mu \IB_{\Kbb^c} \cdot {\wB^{(1)}}^* - \HB^{(2)}({\wB^{(1)}}^*-{\wB^{(2)}}^*) + \frac{1}{n} \mu{ \HB^{(1)} }^{-1} {\XB^{(1)}}^\top \epsilonB^{(1)} + \frac{1}{n}{\XB^{(2)}}^\top\epsilonB^{(2)} \bigg) .
\end{align*}
Similarly, noticing that 
\[\Ebb\epsilonB^{(1)} = 0,\quad 
\Ebb \epsilonB^{(1)}{\epsilonB^{(1)}}^\top = \sigma^2 \IB,\quad\Ebb\epsilonB^{(2)} = 0,\quad 
\Ebb \epsilonB^{(2)}{\epsilonB^{(2)}}^\top = \sigma^2 \IB,
\]
so 
\begin{align}
    \la \HB^{(1)} , \Ebb (\wB^{(2)}  - {\wB^{(1)}}^*)(\wB^{(2)}  - {\wB^{(1)}}^*)^\top \ra
    &= ({\wB^{(1)}}^* - {\wB^{(2)}}^*)^\top \cdot \HB^{(1)} \cdot {\HB^{(2)}}^2 \cdot \big(  {\HB^{(2)} +  \mu\IB}\big)^{-2} \cdot ({\wB^{(1)}}^* - {\wB^{(2)}}^*) \notag \\
    &\qquad + \frac{\sigma^2}{n} \mu^2 \la \HB^{(2)}, \big( \HB^{(2)} + \mu \IB \big)^{-1} \cdot {\HB^{(1)}}^\inv \cdot \big( \HB^{(2)} + \mu \IB \big)^{-1} \ra \notag \\
    &\qquad + \frac{\sigma^2}{n} \la \HB^{(2)}, \big( \HB^{(2)} + \mu \IB \big)^{-1} \cdot \HB^{(2)} \cdot \big( \HB^{(2)} + \mu \IB \big)^{-1} \ra \notag \\
    &= \biasF + \frac{\sigma^2}{n} \big(\dimF + \rank(\HB^{(1)}) \big), \label{eq:r1w2:non-shared-optimal}
\end{align}
where the first line is because  by definition  $\HB^{(1)}\IB_{\Kbb^c} = 0$, and
\begin{align}
    \la \HB^{(2)} , \Ebb (\wB^{(2)}  - {\wB^{(2)}}^*)(\wB^{(2)}  - {\wB^{(2)}}^*)^\top \ra
    &= (\IB_\Kbb {\wB^{(1)}}^* - {\wB^{(2)}}^*)^\top \cdot \mu^2 \cdot \HB^{(2)} \cdot \big(  {\HB^{(2)} +  \mu\IB}\big)^{-2} \cdot (\IB_\Kbb {\wB^{(1)}}^* - {\wB^{(2)}}^*) \notag \\
    &\qquad + \frac{\sigma^2}{n} \mu^2 \la \HB^{(2)}, \big( \HB^{(2)} + \mu \IB \big)^{-1} \cdot {\HB^{(1)}}^\inv \cdot \big( \HB^{(2)} + \mu \IB \big)^{-1} \ra \notag \\
    &\qquad + \frac{\sigma^2}{n} \la \HB^{(2)}, \big( \HB^{(2)} + \mu \IB \big)^{-1} \cdot \HB^{(2)} \cdot \big( \HB^{(2)} + \mu \IB \big)^{-1} \ra \notag \\
    &= \biasI + \frac{\sigma^2}{n} \big(\dimI - \rank(\HB^{(1)}) \big). \label{eq:r2w2:non-shared-optimal}
\end{align}
Combine \eqref{eq:r1w1:non-shared-optimal}, \eqref{eq:r1w2:non-shared-optimal}, \eqref{eq:r2w2:non-shared-optimal} and we are done with the proof of forgetting and intransigence.
\end{proof}

\subsection{Proof of Theorem \ref{thm:risk-non-commutable}}

\begin{proof}[Proof of Theorem \ref{thm:risk-non-commutable}]
    The proof can be done by direct computation and is similar to the proof of Theorem \ref{thm:regularized-learning}.

We can adopt \eqref{eq:w1-w*:cov}, \eqref{eq:w2-w*:cov}, \eqref{eq:risk1}, \eqref{eq:risk2} from the proof of Theorem \ref{thm:regularized-learning} that
\begin{align*}
    \Ebb (\wB^{(1)} - \wB^*)(\wB^{(1)} - \wB^* )^\top 
    &= \IB_{\Kbb^c} \cdot \wB^*{\wB^*}^\top \cdot \IB_{\Kbb^c} + \frac{\sigma^2}{n}\cdot { \HB^{(1)} }^{-1}, \\
    \Ebb (\wB^{(2)}  - \wB^*)(\wB^{(2)}  - \wB^*)^\top
    &= \mu^2\cdot \big( \HB^{(2)} + \mu \IB \big)^{-1} \cdot \Ebb (\wB^{(1)}  - \wB^*)(\wB^{(1)}  - \wB^*)^\top \cdot \big( \HB^{(2)} + \mu \IB \big)^{-1} \notag \\
    &\qquad + \frac{\sigma^2}{n} \big( \HB^{(2)} + \mu \IB \big)^{-1} \cdot \HB^{(2)} \cdot \big( \HB^{(2)} + \mu \IB \big)^{-1}, \\
    \Rcal_1 (\wB) &=  \la\HB^{(1)},\ (\wB - \wB^*)(\wB - \wB^*)^\top \ra + \sigma^2, \\
    \Rcal_2 (\wB) &=  \la\HB^{(2)},\ (\wB - \wB^*)(\wB - \wB^*)^\top \ra + \sigma^2. 
\end{align*}
As a result,
\begin{align} \label{eq:r1w1:non-commutable}
    \la \HB^{(1)},\ \Ebb (\wB^{(1)} - \wB^*)(\wB^{(1)} - \wB^*)^\top \ra = \frac{\sigma^2}{n} \rank(\HB^{(1)}) 
\end{align}
since by definition  $\HB^{(1)}\IB_{\Kbb^c} = 0$,
\begin{align}
    \la \HB^{(1)},\ \Ebb (\wB^{(2)} - \wB^*)(\wB^{(2)} - \wB^*)^\top \ra 
    &= \mu^2\cdot \la \HB^{(1)},\  \big( \HB^{(2)} + \mu \IB \big)^{-1} \cdot \IB_{\Kbb^c} \cdot \wB^*{\wB^*}^\top \cdot \IB_{\Kbb^c} \cdot \big( \HB^{(2)} + \mu \IB \big)^{-1} \ra \notag \\
    &\qquad + \frac{\sigma^2}{n} \mu^2 \cdot\la \HB^{(1)},\  \big( \HB^{(2)} + \mu \IB \big)^{-1} \cdot { \HB^{(1)} }^{-1} \cdot \big( \HB^{(2)} + \mu \IB \big)^{-1} \ra \notag \\
    &\qquad + \frac{\sigma^2}{n} \la \HB^{(1)},\  \big( \HB^{(2)} + \mu \IB \big)^{-1} \cdot \HB^{(2)} \cdot \big( \HB^{(2)} + \mu \IB \big)^{-1} \ra \notag \\
    &= \biasF + \frac{\sigma^2}{n} \big(\dimF + \rank(\HB^{(1)}) \big), \label{eq:r1w2:non-commutable}
\end{align}
and that
\begin{align}
    \la \HB^{(2)},\ \Ebb (\wB^{(2)} - \wB^*)(\wB^{(2)} - \wB^*)^\top \ra 
    &= \mu^2\cdot \la \HB^{(2)},\  \big( \HB^{(2)} + \mu \IB \big)^{-1} \cdot \IB_{\Kbb^c} \cdot \wB^*{\wB^*}^\top \cdot \IB_{\Kbb^c} \cdot \big( \HB^{(2)} + \mu \IB \big)^{-1} \ra \notag \\
    &\qquad + \frac{\sigma^2}{n} \mu^2 \cdot\la \HB^{(2)},\  \big( \HB^{(2)} + \mu \IB \big)^{-1} \cdot { \HB^{(1)} }^{-1} \cdot \big( \HB^{(2)} + \mu \IB \big)^{-1} \ra \notag \\
    &\qquad + \frac{\sigma^2}{n} \la \HB^{(2)},\  \big( \HB^{(2)} + \mu \IB \big)^{-1} \cdot \HB^{(2)} \cdot \big( \HB^{(2)} + \mu \IB \big)^{-1} \ra \notag \\
    &= \biasI + \frac{\sigma^2}{n} \big(\dimI - \rank(\HB^{(1)}) \big). \label{eq:r2w2:non-commutable}
\end{align}
Combine \eqref{eq:r1w1:non-commutable}, \eqref{eq:r1w2:non-commutable}, \eqref{eq:r2w2:non-commutable} and we are done with the proof of forgetting and intransigence.
\end{proof}


\end{document}